\date{}
\DeclareMathOperator*{\argmax}{arg\,max}
\newtheorem*{rep@theorem}{\rep@title}
\newcommand{\newreptheorem}[2]{
\newenvironment{rep#1}[1]{
 \def\rep@title{#2 \ref{##1}}
 \begin{rep@theorem}}
 {\end{rep@theorem}}}
\newtheorem{theorem}{Theorem}
\newtheorem{lemma}[theorem]{Lemma}
\newtheorem{corollary}[theorem]{Corollary}
\newtheorem{proposition}[theorem]{Proposition}
\theoremstyle{definition}
\newtheorem{definition}[theorem]{Definition}
\pgfplotsset{compat=1.14}
\definecolor{mygreen}{RGB}{27, 158, 119}
\definecolor{myorange}{RGB}{217, 95, 2}
\definecolor{mypurple}{RGB}{117, 112, 179}
\title{Experimental Design for Cost-Aware\\Learning of Causal Graphs}
\author{
  Erik M. Lindgren\textsuperscript{*} \\
  \texttt{erikml@utexas.edu} \\
  \and
  Murat Kocaoglu\textsuperscript{\textdagger} \\
  \texttt{murat@ibm.com} \\
  \and
  Alexandros G. Dimakis\textsuperscript{*} \\
  \texttt{dimakis@austin.utexas.edu} \\
  \and
  Sriram Vishwanath\textsuperscript{*} \\
  \texttt{sriram@ece.utexas.edu} \\
  \and
  \textsuperscript{*}University of Texas at Austin
  \\
  \textsuperscript{\textdagger}IBM Research, MA, USA
}
\begin{document}
\frenchspacing

\maketitle

\begin{abstract}
We consider the minimum cost intervention design problem: Given the essential graph of a causal graph and a cost to intervene on a variable, identify the set of interventions with minimum total cost that can learn any causal graph with the given essential graph. We first show that this problem is NP-hard. We then prove that we can achieve a constant factor approximation to this problem with a greedy algorithm. We then constrain the sparsity of each intervention. We develop an algorithm that returns an intervention design that is nearly optimal in terms of size for sparse graphs with sparse interventions and we discuss how to use it when there are costs on the vertices.
\end{abstract}

\section{Introduction}
Causality is a fundamental concept in science and an essential tool for multiple disciplines such as engineering, medical research, and economics \cite{rotmensch2017learning, ramsey2010six, rubin2006estimating}. Discovering causal relations has been studied extensively under different frameworks and under various assumptions \cite{Pearl2009, Imbens2015}. To learn the cause-effect relations between variables without any assumptions other than basic modeling assumptions, it is essential to perform experiments. Experimental data combined with observational data has been successfully used for recovering causal relationships in different domains \cite{sachs2005causal}.

There is significant cost and time required to set up experiments. Often there are many ways to design experiments to discover cause-and-effect relationships. Considering cost when designing experiments can critically change the total cost needed to learn the same causal system. \citet{king2004functional} created a robot scientist that would automatically perform experiments to learn how a yeast gene functions. Different experiments required different materials with large variations with costs. By considering material cost when defining interventions, their robot scientist was able to learn the same causal structure significantly cheaper.

Since the work of King et al., there have been a number of papers on automated and cost-sensitive experiment design for causal learning in biological systems. \citet{sverchkov2017review} discuss some aspects on how to design costs. \citet{ness2017bayesian} develop an active learning strategy for cost-aware experiments in protein networks.

We study the problem of cost-aware causal learning in Pearl's framework of causality \cite{Pearl2009} under the causal sufficiency assumption, i.e., when there are no latent confounders. In this framework, there is a directed acyclic graph (DAG) called the \textit{causal graph} that describes the causal relationships between the variables in our system. Learning direct causal relations between the variables in the system is equivalent to learning the directed edges of this graph.  From observational data, we can learn of the existence of a causal edge, as well as some of the edge directions, however in general we cannot learn the direction of every edge. To learn the remaining causal edges, we need to perform experiments and collect additional data from these experiments \cite{EberhardtThesis, Hauser2012b, Hyttinen2013}.

An intervention is an experiment where we force a variable to take a particular value. An intervention is called a stochastic intervention when the value of the intervened variable is assigned to another independent random variable. Interventions can be performed on a single variable, or a subset of variables simultaneously.  In the non-adaptive setting, which is what we consider here, all interventions are performed in parallel. In this setting, we can only guarantee that an edge direction is learned when there is an intervention such that exactly one of the endpoints is included \cite{Kocaoglu2017a}.

In the minimum cost intervention design problem, as first formalized by \citet{Kocaoglu2017a}, there is a cost to intervene on each variable. We want to learn the causal direction of every edge in the graph with minimum total cost. This becomes a combinatorial optimization problem, and so two natural questions that have not yet been addressed are if the problem is NP-hard and if the greedy algorithm proposed by \cite{Kocaoglu2017a} has any approximation guarantees.

Our contributions:
\begin{itemize}
    \item We show that the minimum cost intervention design problem is NP-hard.
    \item We modify the greedy coloring algorithm proposed in \cite{Kocaoglu2017a}. We establish that our modified algorithm is a $(2 + \varepsilon)$-approximation algorithm for the minimum cost intervention design problem. Our proof makes use of a connection to submodular optimization.
    \item We consider the sparse intervention setup where each experiment can include at most $k$ variables. We show a lower bound to the minimum number of interventions and create an algorithm which is a $(1 + o(1))$-approximation to this problem for sparse graphs with sparse interventions.
    \item We introduce the minimum cost $k$-sparse intervention design problem and develop an algorithm that is essentially optimal for the unweighted variant of this problem on sparse graphs. We then discuss how to extend this algorithm to the weighted problem.
\end{itemize}

\section{Minimum Cost Intervention Design}

\subsection{Relevant Graph Theory Concepts}

We first discuss some graph theory concepts that we utilize in this work.

A proper \textit{coloring} of a graph $G = (V, E)$ is an assignment of colors $c: V \mapsto \{1, 2, \ldots, t\}$ to the vertices $V$ such that for all edges $uv \in E$ we have $c(u) \neq c(v)$. The $\textit{chromatic number}$ is the minimum number of colors needed for a proper coloring to exist and is denoted by $\chi$.

An \textit{independent set} of a graph $G = (V, E)$ is a subset of the vertices $S \subseteq V$ such that for all pairs of vertices $u, v \in S$ we have that $uv \notin E$. The independence number is the size of the maximum independent set and is denoted by $\alpha$. If there is a weight function on the vertices, a maximum weight independent set is an independent set with the largest total weight.

A \textit{vertex cover} of a graph $G = (V, E)$ is a subset of vertices $S$ such that for every edge $uv \in E$, at least one of $u$ or $v$ are in $S$. Vertex covers are closely related to independent sets: if $S$ is a vertex cover then $V \setminus S$ is an independent set and vice versa. Further, if $S$ is a minimum weight vertex cover then $V \setminus S$ is a maximum weight independent set. The size of the smallest vertex cover of $G$ is denoted $\tau$.

A \textit{chordal graph} is a graph such that for any cycle $v_1, v_2, \ldots, v_t$ for $t \geq 4$, there is a chord, which is an edge between two vertices that are not adjacent in the cycle. There are linear complexity algorithms for finding a minimum coloring, maximum weight independent set, and minimum weight vertex cover of a chordal graph. Any induced subgraph of a chordal graph is also a chordal graph.

Given a graph $G = (V, E)$ and a subset of vertices $I \subseteq V$, the \textit{cut} $\delta(I)$ is the set of edges $uv \in E$ such that $u \in I$ and $v \in V \setminus I$.

\subsection{Causal Graphs and Interventional Learning}
Consider two variables $X, Y$ of a system. If every time we change the value of $X$, the value of $Y$ changes but not vice versa, then we suspect that variable $X$ \emph{causes} $Y$. If we have a set of variables, the same intuition carries through while defining causality. This asymmetry in the directional influence between variables is at the core of causality.

\citet{Pearl2009} and \citet{Spirtes2001} formalized the notion of causality using directed acyclic graphs (DAGs). DAGs are suitable to encode asymmetric relations. Consider a system of $n$ random variables $\mathcal{V} = \{V_1,V_2,\hdots, V_n\}$. The structural causal model of Pearl models the causal relations between variables as follows: each variables $V_i$ can be written as a deterministic function of a set of other variables $S_i$ and an unobserved variable $E_i$ as $V_i = f_i(S_i,E_i)$. We assume that $E_i$, called an exogenous random variable, is independent from everything, i.e., every variable in $\mathcal{V}$ and all other exogenous variables $E_j$. The graph that captures these directional relations is called the causal graph between variables in $\mathcal{V}$. We restrict the graph created to be acyclic, so that if we replace the value of a variable we potentially change the descendent variables but the ancestor variables will not change.

Given a causal graph, a variable is said to be caused by the set of parents \footnote{To be more precise, parent nodes are said to directly cause a variable whereas ancestors cause indirectly through parents. In this paper, we will not make this distinction since we do not use indirect causal relations for graph discovery.}. This is precisely $S_i$ in the structural causal model. It is known that the joint distribution induced on $\mathcal{V}$ by a structural causal model factorizes with respect to the causal graph. Thus, the causal graph $D$ is a valid Bayesian network for the observed joint distribution.

There are two main approaches for learning causal graphs from observational distribution: \emph{i) score based} \cite{Geiger1994,Heckerman1995}, and \emph{ii) constraint based} \cite{Pearl2009,Spirtes2001}. Score based approaches optimize a score (e.g., likelihood) over all Bayesian networks to recover the most likely graph. Constraint-based approaches, such as IC and PC algorithms, use conditional independence tests to identify the causal edges that are invariant across every graph consistent with the observed data. This remaining mixed graph is called the \emph{essential graph}. The undirected components of the essential graph are always \emph{chordal} \cite{Spirtes2001,Hauser2012a}

Although PC runs in time exponential in the maximum degree of the graph, various extensions make it feasible to run it even on graphs with 30,000 nodes with maximum degree up to 12 \cite{ramsey2017million}. To learn the rest of the causal edge directions without additional assumptions, we need to use \textit{interventions} on the undirected, chordal components. 
\footnote{It is known that the edges identified in a chordal component of the skeleton do not help identify edges in another component \cite{Hauser2012a}.Thus, each chordal component learning task can be treated as an individual problem.}
An intervention is an experiment where a random variable is \emph{forced} to take a certain value. Due to the acyclicity assumption on the graph, if $X \rightarrow Y$, then intervening on $Y$ should not change the distribution of $X$, however intervening on $X$ will change the distribution of $Y$. 
Running the observational learning algorithms like PC/IC after an intervention on a set $S$ of variables, we can learn the new skeleton after the intervention, which allows us to identify the immediate children and immediate parents of the intervened variables. Therefore, if we perform a randomized experiment on a set  $S$ of vertices in the causal graph, we can learn the direction of all the edges cut between $S$ and $V \setminus S$. This approach has been heavily used in the literature \cite{Hyttinen2013,Hauser2012b,Shanmugam2015}.

\subsection{Graph Separating Systems and Minimum Cost Intervention Design}

Given a causal DAG $D = (V, E)$, we observe the essential graph $\mathcal{E}(D)$. \citet{Kocaoglu2017a} established that if we want to guarantee learning the direction of the undirected edges with nonadaptive interventions, it is nessesary and sufficient for our intervention design $\mathcal{I} = \{I_1, I_2, \ldots, I_m\}$ to be a \textit{graph separating system} on the undirected component of the graph $G$.

\begin{definition}[Graph Separating System]
Given an undirected graph $G = (V, E)$, a \textit{graph separating system} of size $m$ is a collection of $m$ subsets of vertices $\mathcal{I} = \{I_1, I_2, \ldots, I_m\}$ such that every edge is cut at least once, that is, $\bigcup_{I \in \mathcal{I}}\delta(I) = E$.
\end{definition}

Recall that the undirected component of the essential graph of a causal DAG is always a chordal graph. We can now define the minimum cost intervention design problem.

\begin{definition}[Minimum Cost Intervention Design]
Given a chordal graph $G = (V, E)$, a set of weights $w_v$ for all $v \in V$, and a size constraint $m \geq \lceil \log \chi \rceil$, the \textit{minimum cost intervention design problem} is to find a graph separating system $\mathcal{I}$ of size at most $m$ that minimizes the cost
\[
\mathrm{cost}(\mathcal{I}) = \sum_{I \in \mathcal{I}}\sum_{v \in I}w_v.
\]
\end{definition}

Graph separating systems are tightly related to graph colorings. \citet{Cheng1984} proved that the smallest graph separating system has size $m = \lceil \log \chi \rceil$, where $\chi$ is the chromatic number. To see this, for each vertex, we create a binary vector $c(v)$ where $c(v)_i = 1$ if $v \in I_i$ and $c(v)_i = 0$ if $v \notin I_i$. Since two neighboring vectors $u$ and $v$ must have, for some intervention $I_i$, exactly one of $u \in I_i$ or $v \in I_i$, the assignment of vectors to vertices $c: V \mapsto {0,1}^m$ is a proper coloring. With a size $m$ graph separating system, we are able to create $2^m$ different colors, proving that the size of the smallest separating system is exactly $m = \lceil \log \chi \rceil$.

The equivalence between graph separating systems and coloring allows us to define an equivalent coloring version of the minimum cost intervention design problem, which was first developed in \cite{Kocaoglu2017a}.

\begin{definition}[Minimum Cost Intervention Design, Coloring Version]
Given a chordal graph $G = (V, E)$, a set of weights $w_v$ for all $v \in V$, and the colors $C = \{0, 1\}^m$ such that $\vert C \vert \geq \chi$, the \textit{coloring version of the minimum cost intervention design problem} is to find a proper coloring $c: V \mapsto C$ that minimizes the total cost
\[
\mathrm{cost(c)} = \sum_{v \in V}\| c(v) \|_1 w_v.
\]
\end{definition}

Given a minimum cost coloring from the coloring variant of the minimum cost intervention design, we can create a minimum cost intervention design. Further, the reduction is approximation preserving.

In practice, it can sometimes be difficult to intervene on a large number of variables. A variant of intervention design of interest is when every intervention can only involve $k$ variables. For this problem, we want our interventions to be a $k$-sparse graph separating system.

\begin{definition}[$k$-Sparse Graph Separating System]
Given an undirected graph $G = (V, E)$, a \textit{$k$-sparse graph separating system} of size $m$ is a collection of $m$ subsets of vertices $\mathcal{I} = \{I_1, I_2, \ldots, I_m\}$ such that all subsets $I_i$ satisfy $\vert I_i \vert \leq k$ and every edge is cut at least once, that is, $\bigcup_{I \in \mathcal{I}}\delta(I) = E$.
\end{definition}

We consider two optimization problems related to $k$-sparse graph separating systems. In the first one we want to find a graph separating system of minimum size.

\begin{definition}[Minimum Size $k$-Sparse Intervention Design]
Given a chordal graph $G = (V, E)$ and a sparsity constraint $k$, the \textit{minimum size $k$-sparse intervention design problem} is to find a $k$-sparse graph separating system for $G$ of minimum size, that is, we want to minimize the cost
\[
\mathrm{cost}(\mathcal{I}) = \vert \mathcal{I} \vert.
\]
\end{definition}

For the next problem, we want to find the $k$-sparse intervention design of minimum cost where there is a cost to intervene on every variable.

\begin{definition}[Minimum Cost $k$-Sparse Intervention Design]
Given a chordal graph $G = (V, E)$, a set of weights $w_v$ for all $v \in V$, a sparsity constraint $k$, and a size constraint $m$, the \textit{minimum cost $k$-sparse intervention design problem} is to find a $k$-sparse graph separating system $\mathcal{I}$ of size $m$ that minimizes the cost
\[
\mathrm{cost}(\mathcal{I}) = \sum_{I \in \mathcal{I}}\sum_{v \in I}w_v.
\]
\end{definition}

\section{Related Work}
One problem of interest is to find the intervention design with the smallest number of interventions. \citet{Eberhardt2005} established that $\lceil \log n \rceil$ is sufficient and nessesary in the worst case. \citet{EberhardtThesis} established that graph separating systems are necessary across all graphs (the example he used is the complete graph). \citet{Hauser2012b} establish the connection between graph colorings and intervention designs by using the key observation of \citet{Cheng1984} that graph colorings can be used to construct graph separating systems, and vise-versa. This leads to the requirement and sufficiency of $\lceil\log(\chi)\rceil$ experiments where $\chi$ is the chromatic number of the graph.

Since graph coloring can be done efficiently for chordal graphs, we can efficiently create a minimum size intervention design when given as input a chordal skeleton. Similarly, if we are given as input an arbitrary graph, perhaps due to side information on some edge directions, it is NP-hard to find a minimum size intervention design \citep{Hyttinen2013, Cheng1984}.

\citet{Hu2014} proposed a randomized algorithm that requires only $O(\log\log n)$ experiments and learns the causal graph with high probability.

Closer to our setup, \citet{Hyttinen2013} considers a special case of minimum cost intervention design problem when every vertex has cost $1$ and the input is the complete graph. They were able to optimally solve this special case. \citet{Kocaoglu2017a} was the first to formalize the minimum cost intervention design problem on general chordal graphs and the relationship to its coloring variant. They used the coloring variant to develop a greedy algorithm that finds a maximum weighted independent set and colors this set with the available color with the lowest weight. However their work did not establish approximation guarantees on this algorithm and it is not clear how many iterations the greedy algorithm needs to fully color the graph---we address these issues in this paper. Further it was unknown until our work that the minimum cost intervention design problem is NP-hard.

There has been a lot of prior work when every intervention is constrained to be of size at most $k$. \citet{Eberhardt2005} was the first to consider the minimum size $k$-sparse intervention design problem and established sufficient conditions on the number of interventions needed for the complete graph. \citet{Hyttinen2013} showed how $k$-sparse separating system constructions can be used for intervention designs on the complete graph using the construction of \citet{Katona1966}. They establish the necessary and sufficient number of $k$-sparse interventions needed to learn all causal directions in the complete graph. \citet{Shanmugam2015} illustrate that for the complete graphs separating systems are necessary even under the constraint that each intervention has size at most $k$. They also identify an information theoretic lower bound on the necessary number of experiments and propose a new optimal $k$-sparse separating system construction for the complete graph. To the best of our knowledge there has been no graph dependent bounds on the size of a $k$-sparse graph separating systems until our work.

Ghassami et al. considered the dual problem of maximizing the number of learned causal edges for a given number of interventions \cite{Ghassami2017}. They show that this problem is a submodular maximization problem when only interventions involving a single variable are allowed. We note that their connection to submodularity is different than the one we discover in our work.

Graph coloring has been extensively studied in the literature. There are various versions of graph coloring problem. We identify a connection of the minimum cost intervention design problem to the \emph{general optimum cost chromatic partition problem} (GOCCP). GOCCP is a graph coloring problem where there are $t$ colors and a cost $\gamma_{vi}$ to color vertex $v$ with color $i$. It is a more general version of the minimum cost intervention design problem. \citet{Jansen1997a} established that for graphs with bounded treewidth $r$, the GOCCP can be solved exactly in time $O(t^r n)$. This implies that for graphs with maximum degree $\Delta$ we can solve the minimum cost intervention design problem exactly in time $O(2^{m\Delta}n)$. Note that $m$ is at least $\log \Delta$ and can be as large as $\Delta$, thus this algorithm is not practical even for $\Delta = 12$.

\section{Hardness of Minimum Cost Intervention Design}

In this section, we show that the minimum cost intervention design problem is NP-hard.

We assume that the input graph is chordal, since it is obtained as an undirected component of a causal graph skeleton. We note that every chordal graph can be realized by this process.
\begin{proposition}
\label{proposition:all_chordal}
For any undirected chordal graph $G$, there is a causal graph $D$ such that the essential graph $\mathcal{E}(D)=G$.
\end{proposition}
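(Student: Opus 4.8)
The plan is to build $D$ as an acyclic orientation of $G$ that contains no v-structures (immoralities) and then argue that such a DAG has $G$ itself, with all edges undirected, as its essential graph.

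First I would use chordality directly: $G$ admits a perfect elimination ordering $v_1,\dots,v_n$, i.e.\ an ordering in which, for each $i$, the neighbors of $v_i$ lying in $\{v_{i+1},\dots,v_n\}$ induce a clique. Orient every edge $\{v_i,v_j\}$ with $i<j$ as $v_j\to v_i$, and call the result $D$. Then $D$ is acyclic, since $v_n,v_{n-1},\dots,v_1$ is a topological order, and the skeleton of $D$ is exactly $G$. Next I would check that $D$ has no immorality: if $v_a\to v_c\leftarrow v_b$ were an induced v-structure, then $a,b>c$, so $v_a$ and $v_b$ are both neighbors of $v_c$ among $\{v_{c+1},\dots,v_n\}$, and the perfect-elimination property forces $v_a\sim v_b$, contradicting non-adjacency. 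So $D$ is a DAG with skeleton $G$ and no v-structures.

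It then remains to show $\mathcal{E}(D)=G$. The essential graph of $D$ is obtained from its skeleton by first orienting the edges that participate in v-structures of $D$ and then closing under Meek's orientation rules; each Meek rule orients a new edge only when an already-oriented edge is present. Since $D$ has no v-structure, the initial oriented set is empty, the closure procedure terminates without orienting anything, and $\mathcal{E}(D)$ is the skeleton of $D$ with all edges undirected, i.e.\ $G$. (Alternatively, and avoiding the Meek-rule machinery, one can argue that every edge of $G$ is reversible within the Markov equivalence class of $D$: running maximum cardinality search from a vertex $v$ yields, in reverse, a perfect elimination ordering ending at $v$, so for each edge $\{u,v\}$ there is an acyclic immorality-free orientation of $G$ with $u\to v$ and another with $v\to u$; since Markov equivalence is characterized by sharing the same skeleton and the same immoralities, all of these orientations lie in a single equivalence class, hence no edge is compelled, and $\mathcal{E}(D)=G$.)

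The only real content is this last step — the passage from ``$D$ has no v-structures'' to ``$D$ has no compelled edges at all'' — which I expect to settle by the Meek-rules argument or the explicit edge-reversal argument above; both rely on chordality of $G$ being precisely what makes the immorality-free acyclic orientation of Step~1 exist.
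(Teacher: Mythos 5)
Your proof is correct. Note, however, that the paper itself never proves Proposition~\ref{proposition:all_chordal}: it is stated in the hardness section as a known fact (it follows from the characterization of essential graphs in the literature, e.g.\ Andersson--Madigan--Perlman and Hauser--B\"uhlmann, where the undirected chain components of an essential graph are exactly the chordal graphs), and no argument appears in the body or the appendix. So there is no paper proof to compare against; your write-up supplies a complete, self-contained argument that the paper omits. Both halves of your argument are sound: the perfect-elimination-ordering orientation is acyclic with skeleton $G$ and the PEO clique condition rules out immoralities; and the passage from ``no v-structures'' to ``no compelled edges'' is correctly handled either by Meek-rule completeness (every rule's antecedent contains a directed edge, so the closure of the empty set is empty) or, more elementarily, by the MCS observation that every vertex can be made last in some perfect elimination ordering, so both orientations of each edge occur within the (single, by Verma--Pearl) Markov equivalence class.
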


Thus every chordal graph is the undirected subgraph of the essential graph for some causal DAG. This validates the problem definition of the minimum cost intervention design as any chordal graph can be given as input. We now state our hardness result.
\begin{theorem}\label{mcid-np-hard}
The minimum cost intervention design problem is NP-hard.
\end{theorem}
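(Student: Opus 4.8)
The plan is to argue via the coloring version of the problem, which the excerpt records is equivalent to the original under an approximation-preserving reduction, so it suffices to show that finding a minimum-cost proper coloring $c:V\to\{0,1\}^m$ of a weighted chordal graph is NP-hard; since the size bound $m$ is part of the input, the reduction may choose any $m\ge\lceil\log\chi\rceil$. The structural fact that shapes everything is that $m$ must not be chosen too large: if $m\ge\chi$, then the colors of $\ell_1$-cost at most $1$ already number $m+1\ge\chi+1$, so an optimal coloring simply places a maximum-weight independent set on the free color $0$ and the rest on cost-$1$ colors, yielding optimum $w(V)-\alpha_w(G)$, which is polynomial on chordal graphs. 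Hardness must therefore live in the regime $\lceil\log\chi\rceil\le m<\chi$, where the palette is \emph{tiered}: exactly one color of cost $0$, exactly $m$ of cost $1$, exactly $\binom{m}{2}$ of cost $2$, and so on. There the problem amounts to partitioning $G$ into independent sets and assigning the classes to cost tiers, where the free tier admits only one class, the next only $m$, etc., so every coloring pays an inflated cost for its ``overflow'' classes and the real decision is which classes get to be cheap.

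Building on this, I would give a direct reduction from a problem with a ``pack into a bounded number of cheap slots'' flavor — a set-cover/exact-cover variant or $3$-SAT. The chordal instance would be built as a clique tree: a family of cliques glued along shared vertices, where clique sizes pin down $\chi$ (hence fix $m$ in the hard regime) and the shared vertices provide the coupling that a disjoint union of cliques — which decouples into independent easy sorting problems — would lack. I would put weights on two widely separated scales: a heavy scale on a skeleton that, in any optimal coloring, is forced into the cheap tiers in an essentially unique canonical pattern; and a light scale on the vertices that encode the source instance, so that once the skeleton pattern is pinned, the residual optimization over the light vertices is exactly the source problem. One then shows that a YES instance yields a coloring of cost at most a threshold $B$, and conversely that every coloring of cost $\le B$ must exhibit the canonical skeleton shape and hence decodes to a source solution; along the way one checks that the glued clique tree is chordal (so by Proposition~\ref{proposition:all_chordal} it is a legitimate input) and that $m\ge\lceil\log\chi\rceil$ holds for the chosen parameters.

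The step I expect to be the main obstacle is making the reduction robust to the flexibility of the $\ell_1$ cost, which is far less rigid than a general optimum-cost-chromatic-partition cost: within a tier the $\binom{m}{\ell}$ colors of cost $\ell$ are interchangeable, and an optimal coloring is free to use more than $\chi$ classes, to shuffle weight among same-tier classes, or to split a class. The gadget must be engineered so that only which \emph{tier} a class lands in affects the objective, the weight-separation argument that pins the skeleton to its canonical tiers must survive all these degrees of freedom, and the light-scale encoding must be insensitive to intra-tier permutations. A secondary difficulty is transmitting enough constraints between source variables purely through shared clique vertices while keeping the construction chordal — routine bookkeeping in principle, but easy to get subtly wrong.
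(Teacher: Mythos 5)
Your framing of the problem is sound: you correctly identify that the coloring version suffices, that the regime $m\ge\chi$ is polynomial (the optimum collapses to $w(V)-\alpha_w(G)$), and that the essential difficulty is the tiered palette with exactly $\binom{m}{\ell}$ interchangeable colors of cost $\ell$. But what you have written is a plan for a reduction, not a reduction: the gadget that is supposed to carry all of the hardness --- the clique tree whose heavy skeleton is forced into a canonical tier pattern and whose light vertices encode the source instance --- is never constructed, and you yourself flag the two places where it is most likely to fail (robustness to intra-tier permutations and to the freedom to use more than $\chi$ classes, and transmitting constraints through shared clique vertices while staying chordal). Since every substantive step of an NP-hardness proof lives inside that gadget, the argument as it stands has a genuine gap; one cannot verify that a reduction of the proposed shape exists without seeing it.

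For comparison, the paper takes a different and in some ways more economical route (Theorem~\ref{mcid-np-hard-general}): it starts from the known reduction of \citet{kroon1996optimal} from numerical three dimensional matching to the optimum cost chromatic partition problem on \emph{interval} graphs with color costs in $\{0,1,2,3\}$ and prescribed multiplicities, and then \emph{pads} that interval graph with additional carefully placed intervals so that the required number of color classes in each cost tier becomes exactly $1$, $2t$, $\binom{2t}{2}$, $\binom{2t}{3}$, $\binom{2t}{4}$ --- i.e., exactly the $\ell_1$-graded palette $\{0,1\}^m$ with $m=2t$. All vertex weights are $1$, so the hardness is carried entirely by the tier multiplicities rather than by a two-scale vertex weighting; the intra-tier interchangeability you worry about is harmless there because the cost depends only on a color's tier, and the tier assignment is pinned down by a clique counting/exchange argument (each ``Clique ID'' group forms a clique, so any coloring using a color of too-high weight is strictly more expensive than the canonical one). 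This yields the stronger statement that the problem is NP-hard even with unit vertex weights on interval graphs, which your weighted-skeleton approach would not give even if completed.
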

Please see Appendix \ref{hardness-proof} for the proof. Our proof is based on the reduction from numerical 3D matching to a graph coloring problem that is more general than the minimum cost intervention problem on interval graphs by Kroon et al. \cite{kroon1996optimal}. Our hardness proof holds even if the vertex costs are all equal to $1$ and the input graph is an interval graph, which is a subset of chordal graphs that often have efficient algorithms for problems that are hard in general graphs.

It it worth comparing to complexity results on related minimum size intervention design problem. The minimum size intervention design problem on a graph can be solved by finding a minimum coloring on the same graph \citep{Cheng1984, Hauser2012b}. For chordal graphs, graph coloring can be solved efficiently so the minimum size intervention design problem can also be solved efficiently. In contrast, the minimum cost intervention design problem is NP-hard, even on chordal graphs. Both problems are hard on general graphs, which can be due to side information.

\section{Approximation Guarantees for Minimum Cost Intervention Design}

Since the input graph is chordal, we can find the maximum weighted independent sets in polynomial time using Frank's algorithm \cite{Frank1975}. Further, a chordal graph remains chordal after removing a subset of the vertices. The authors of \cite{Kocaoglu2017a} use these facts to construct a greedy algorithm for this weighted coloring problem. Let $G_0 = G$. On iteration $t$, find the maximum weighted independent set in $G_t$ and assign these vertices the available color with the smallest cost. Then let $G_{t+1}$ be the graph after removing the colored vertices from $G_t$. Repeat this until all vertices are colored. Convert the coloring to a graph separating system and return this design.

One issue with this algorithm is it is not clear how many iterations the greedy algorithm will utilize until the graph is fully colored. This is important as we want to satisfy the size constraint on the graph separating system. To reduce the number of colors in the graph, we introduce a \textit{quantization} step to reduce the number of iterations the greedy algorithm requires to completely color the graph. In Figure \ref{quantization-reason-fig} of Appendix \ref{quantization-appendix}, we see an example of a (non-chordal) graph where without quantization the greedy algorithm requires $n/2$ colors but with quantization it only requires $4$ colors.

Specifically, we first find the maximum independent set of the input graph and remove it. We then find the maximum cost vertex of the new graph with weight $w_\mathrm{max}$. For all vertices $v$ in the new graph, we replace the cost $w_v$ with $\lfloor \frac{w_v n^3}{w_\mathrm{max}}\rfloor$. See Algorithm \ref{greedy-alg} for pseudocode describing our algorithm.

The reason we first remove the maximum independent set before quantizing is because the maximum independent set will be colored with a color of weight $0$, and thus not contribute to the cost. We want the quantized costs to not be arbitrarily far from the original costs, except for the vertices that are not intervened on. For example, if there is a vertex with a weight of infinity, we will never intervene on it. However if we were to quantize it the optimal solution to the quantized problem can be arbitrarily far from the true optimal solution. Our method of quantization will allow us to show that a good solution to the quantized weights is also a good solution to the true weights.

\begin{algorithm}
\begin{algorithmic}
\State Input: A chordal graph $G = (V,E)$, positive integral weights $w_i$ for all $i \in V$.
\State \underline{Quantize the vertex weights:}
	\State $S_0 \gets \text{ maximum weighted independent set of G}$
	\State $w_{\textrm{max}} \gets \max_{i \in V\setminus S_0} w_i$
	\State $w_i \gets \left\lfloor \frac{w_i n^3}{w_{\textrm{max}}} \right\rfloor$
\State \underline{Greedy weighted coloring algorithm:}
	\State Assign $S_0$ color $0$
	\State $G_1 \gets G - S_0$
	\State $t \gets 1$
	\While{$G_t$ is not empty:}
		\State $S_t \gets \text{ maximum weight independent set of $G_t$}$
		\State color all vertices of $S_t$ with the color $t$
		\State $G_{t+1} \gets G_t -S_t$
		\State $t \gets t + 1$
		\EndWhile
\State convert the coloring of $G$ to a graph separating system $\mathcal{I}$
\State return $\mathcal{I}$
\end{algorithmic}
\caption{Greedy Coloring Algorithm with Quantization}
\label{greedy-alg}
\end{algorithm}

We now state our main theorem, which guarantees that the greedy algorithm with quantization will return a solution that is a $(2 + \varepsilon)$-approximation from the optimal solution while only using $\log \chi + O(\log \log n)$ interventions. Our algorithm thus returns a good solution to the minimum cost intervention design problem whenever the allowed number of interventions $m \geq \log \chi + O(\log \log n)$. Note that $m \geq \log \chi$ is required for there to exist any graph separating system.

\begin{theorem}\label{approximation}
If the number of interventions $m$ satisfies $m \geq \log \chi + \log \log n + 5$, then the greedy coloring algorithm with quantization for the minimum cost intervention design problem creates a graph separating system $\mathcal{I}_{\mathrm{greedy}}$ such that
\[
\mathrm{cost}(\mathcal{I_{\mathrm{greedy}}}) \leq (2 + \varepsilon)\mathrm{OPT},
\]
where $\varepsilon = \exp(-\Omega(m)) + n^{-1}$.
\end{theorem}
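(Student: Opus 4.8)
The plan is to work with the coloring version of the problem and to analyze Algorithm~\ref{greedy-alg} through three largely separate steps: (a) the quantization step perturbs the optimal cost by at most a $(1+n^{-1})$ factor; (b) on the quantized instance the greedy loop terminates using at most $2^m$ color classes, so the hypothesis $m \ge \log\chi + \log\log n + 5$ guarantees it never runs out of colors; and (c) on the quantized instance the cost of the greedy coloring is at most $\bigl(2 + \exp(-\Omega(m))\bigr)$ times optimal. Chaining (a)--(c) gives the stated bound with $\varepsilon = \exp(-\Omega(m)) + n^{-1}$. Throughout I write $w(S) = \sum_{v\in S}w_v$ and use the identity $\mathrm{cost}(c) = \sum_{v} \|c(v)\|_1 w_v = \sum_{j\ge 1} w\bigl(\{v : \|c(v)\|_1 \ge j\}\bigr)$, which organizes every coloring by the ``level sets'' of the multipliers.

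For step (b): let $G_1 = G - S_0$ and $G_{\ell+1} = G_\ell - S_\ell$ be the residual graphs of the greedy loop. Since chordal graphs are perfect and closed under taking induced subgraphs, $\chi(G_\ell) \le \chi$, so a $\chi(G_\ell)$-coloring of $G_\ell$ contains an independent set of weight at least $w(G_\ell)/\chi$; hence the maximum-weight independent set satisfies $w(S_\ell) \ge w(G_\ell)/\chi$ and $w(G_{\ell+1}) \le (1-1/\chi)w(G_\ell)$. This is exactly where quantization is needed: after quantization every weight is a nonnegative integer at most $n^3$, so $w(G_1) < n^4$, and the residual weight therefore drops below $1$---hence to $0$---within $O(\chi\log n)$ iterations. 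Once the residual weight is $0$, a maximum-weight independent set is a maximum-\emph{size} one, so $|V(G_{\ell+1})| \le (1-1/\chi)|V(G_\ell)|$ and the graph is emptied after $O(\chi\log n)$ further iterations. Thus the greedy coloring uses $1 + O(\chi\log n)$ colors, and $2^m \ge 32\,\chi\log n$ comfortably exceeds this once $m \ge \log\chi + \log\log n + 5$.

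Step (c) is where the submodular connection enters. Let $\mathrm{cov}(\ell)$ denote the maximum weight of a set of vertices that is the union of $\ell$ independent sets of $G$; the underlying set function (ground set: all independent sets of $G$; value: weight of the union) is monotone and submodular, and its marginal-gain-maximizing step is precisely ``extract a maximum-weight independent set of the residual graph'', i.e.\ exactly the greedy move. Ordering colors by Hamming weight and writing $\ell_j = \sum_{i=0}^{j-1}\binom{m}{i}$ for the number of colors of Hamming weight below $j$, the level-set identity gives that the greedy coloring costs $\sum_{j\ge1} w(G_{\ell_j})$, while \emph{every} coloring costs at least $\sum_{j\ge1}\bigl(w(G) - \mathrm{cov}(\ell_j)\bigr)$, since the vertices with multiplier below $j$ are covered by the $\le \ell_j$ colors of Hamming weight below $j$. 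The standard greedy guarantee for monotone submodular maximization says that $w(G) - w(G_\ell) \ge \bigl(1-(1-1/\ell^\star)^{\ell}\bigr)\mathrm{cov}(\ell^\star)$ for every $\ell^\star \le \ell$, i.e.\ $w(G_\ell) \le \bigl(w(G)-\mathrm{cov}(\ell^\star)\bigr) + e^{-\ell/\ell^\star}\mathrm{cov}(\ell^\star)$. I would bound each greedy term $w(G_{\ell_j})$ this way with $\ell^\star = \ell_{j'}$ for a carefully chosen $j' < j$ (morally $j' \approx j/2$: since $\ell_j$ grows super-geometrically, $\ell_j/\ell_{j'}$ is then large), arranged so each $j'$ is reused by at most two values of $j$. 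Summing yields $\sum_j w(G_{\ell_j}) \le 2\sum_j\bigl(w(G)-\mathrm{cov}(\ell_j)\bigr) + (\mathrm{error}) \le 2\,\mathrm{OPT} + (\mathrm{error})$, where the error is a geometric series dominated by its leading term, of order $\exp(-\Omega(\ell_2/\ell_1)) = \exp(-\Omega(m))$ relative to the scale involved.

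The main obstacle is controlling that error term \emph{relative to $\mathrm{OPT}$} rather than relative to $w(G)$: the crude estimate $\mathrm{cov}(\ell_{j'}) \le w(G)$ is far too weak because $\mathrm{OPT}$ may be much smaller than $w(G)$ (e.g.\ when $\chi$ is tiny). Resolving this needs the \emph{full} per-level lower bound $\mathrm{OPT} \ge \sum_j \bigl(w(G)-\mathrm{cov}(\ell_j)\bigr)$, not merely $\mathrm{OPT} \ge w(G) - w(S_0)$, together with the observation from step (b) that only the $O(\log(\chi\log n))$ levels $j$ with $\ell_j = O(\chi\log n)$ produce a nonzero term $w(G_{\ell_j})$; one then tunes the correspondence $j \mapsto j'$ so the leftover slack telescopes against these same surviving terms. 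Finally, the additive $n^{-1}$ in $\varepsilon$ is routine bookkeeping: because $\mathrm{OPT} \ge w(G) - w(S_0) \ge w_{\max}$ and every multiplier is at most $m \le n$, converting the cost on the quantized weights back to the original weights changes each side by at most $m n\, w_{\max}/n^{3} \le n^{-1}\mathrm{OPT}$, so the quantized $\bigl(2+\exp(-\Omega(m))\bigr)$-guarantee transfers to the original instance with an extra additive $n^{-1}\mathrm{OPT}$.
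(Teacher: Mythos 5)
Your three-step decomposition---quantization transfer, termination within $2^m$ colors, and a $(2+e^{-\Omega(m)})$ guarantee on the quantized instance---is exactly the structure of the paper's proof, and your steps (a) and (b) match it essentially verbatim (your geometric-decay argument for (b) is an unrolling of Wolsey's submodular-cover bound, Theorem \ref{submodular-cover}, which the paper cites instead of reproving). Step (c) is also the paper's argument in different notation: your level-set identity is the supermodular chain decomposition, your pairing $j \mapsto j' \approx j/2$ with $\ell_j/\ell_{j'} = \Omega(m)$ is Lemma \ref{color-numbers}, and the factor $2$ from each $j'$ being reused by at most two levels is Lemma \ref{supermodular-chain-lemma}. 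The one place you are genuinely under-specified is the obstacle you yourself flag at the end: controlling $\sum_j e^{-\Omega(m)}\mathrm{cov}(\ell_{j'})$ relative to $\mathrm{OPT}$. Your proposed fix (tuning $j\mapsto j'$ so the slack ``telescopes against the surviving terms'') is not carried out and is more elaborate than needed. The paper's resolution is simpler: run the entire coverage analysis on the residual graph $G-S_0$, so that every coverage term is at most $w(G-S_0)\le \mathrm{OPT}$ and the error is bounded by $e^{-\Omega(m)}\,m\,\mathrm{OPT}=e^{-\Omega(m)}\mathrm{OPT}$ outright; the price is Lemma \ref{same-independent-set}, a swap argument showing that forcing the optimum to assign the weight-$0$ color to $S_0$ (while granting it one extra weight-$1$ color) cannot increase its cost, which legitimizes comparing greedy and optimum over the same residual ground set. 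Since you already invoke $\mathrm{OPT}\ge w(G)-w(S_0)$ in your quantization step, this fix is available within your setup and closes the gap cleanly.
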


See Appendix \ref{greedy-approx-appendix} for the proof of the theorem. We present a brief sketch of our proof.

To show that the greedy algorithm uses a small number of colors, we first define a submodular, monotone, and non-negative function such that every vertex has been colored if and only if this particular submodular function is maximized. This is an instance of the \textit{submodular cover} problem. Wolsey established that the greedy algorithm for the submodular cover problem returns a set with cardinality that is close to the optimal cardinality solution when the values of the submodular function are bounded by a polynomial \cite{wolsey1982analysis}. This is why we need to quantize the weights.

To show that the greedy algorithm returns a solution with small value, we first define a new class of functions which we call \textit{supermodular chain functions}. We then show that the minimum cost intervention design problem is an instance of a supermodular chain function. Using result on submodular optimization from \cite{nemhauser1978analysis, krause2014submodular} and some nice properties of the minimum cost intervention design problem, we are able to show that the greedy algorithm returns an approximately optimal solution.

To relate the quantized weights back to the original weights, we use an analysis that is similar to the analysis used to show the approximation guarantees of the knapsack problem \cite{ibarra1975fast}.

Finally, we remark how our algorithm will perform when there are vertices with infinite cost. These vertices can be interpreted as variables that cannot be intervened on. If these variables form an independent set, then they can be colored with the color of weight zero. We can maintain our theoretical guarantees in this case, since our quantization procedure first removes the maximum weight independent set. If the variables with infinite cost do not form an independent set, then no valid graph separating system has finite cost.

\section{Algorithms for $k$-Sparse Intervention Design Problems}

We first establish a lower bound for how large a $k$-sparse graph separating system must be for a graph $G$ based on the size of the smallest vertex cover of the graph $\tau$.

\begin{proposition}\label{k-sparse-lower}
For any graph $G$, the size of the smallest $k$-sparse graph separating system $m_k^*$ satisfies $m_k^* \geq \frac{\tau}{k}$, where $\tau$ is the size of the smallest vertex cover in the graph $G$.
\end{proposition}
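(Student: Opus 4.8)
The plan is to show that the union of all the intervention sets in any $k$-sparse graph separating system is itself a vertex cover of $G$, and then simply count. Let $\mathcal{I} = \{I_1, I_2, \ldots, I_m\}$ be an arbitrary $k$-sparse graph separating system for $G$, and set $U = \bigcup_{i=1}^{m} I_i$.

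First I would record the elementary but crucial observation about cuts: if an edge $uv \in E$ lies in $\delta(I_i)$, then by definition of the cut exactly one of $u, v$ belongs to $I_i$; in particular $I_i$ contains an endpoint of $uv$. Since $\mathcal{I}$ is a separating system, every edge $uv \in E$ lies in $\delta(I_i)$ for some $i$, so $I_i$ — and hence $U$ — contains at least one endpoint of $uv$. As $uv$ was arbitrary, $U$ is a vertex cover of $G$, and therefore $|U| \geq \tau$.

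The final step is a size count: $\tau \leq |U| = \bigl|\bigcup_{i=1}^m I_i\bigr| \leq \sum_{i=1}^m |I_i| \leq m k$, using the sparsity constraint $|I_i| \leq k$ in the last inequality. Rearranging gives $m \geq \tau/k$. Since this holds for every $k$-sparse graph separating system, it holds in particular for the smallest one, so $m_k^* \geq \tau/k$.

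There is no real obstacle here; the argument is a one-line counting bound once the observation that $\bigcup_i I_i$ is a vertex cover is in place. The only point requiring a small amount of care is making explicit that "$I_i$ cuts edge $uv$'' forces an endpoint of $uv$ into $I_i$ (rather than merely that $I_i$ touches the edge in some weaker sense), which is immediate from the definition of $\delta(I_i)$ given earlier in the paper.
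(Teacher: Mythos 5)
Your argument is correct and is essentially identical to the paper's proof: both rest on the observation that $\bigcup_{i} I_i$ is a vertex cover (so has size at least $\tau$) together with the count $\bigl|\bigcup_i I_i\bigr| \leq mk$. The only cosmetic difference is that the paper phrases it as a proof by contradiction while you argue directly.
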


See Appendix \ref{k-sparse-appendix} for the proof.

\begin{algorithm}
\begin{algorithmic}
\State Input: A chordal graph $G$, a sparsity constraint $k$.
\State $S \gets \text{minimum size vertex cover of $G$}$.
\State $G_S \gets \text{induced graph of $S$ in $G$}$.
\State Find an optimal coloring of $G_S$.
\State Split the color classes of $G_S$ into size $k$ intervention sets $I_1, I_2, \ldots, I_m$.
\State Return $\mathcal{I} = \{I_1, I_2, \ldots, I_m\}$.
\end{algorithmic}
\caption{Algorithm for Min Size  and Unweighted Min Cost $k$-Sparse Intervention Design}
\label{k-sparse-alg}
\end{algorithm}

We use Algorithm \ref{k-sparse-alg} to find a small $k$-sparse graph separating system. It first finds the minimum cardinality vertex cover $S$. It then finds an optimal coloring of the graph induced with the vertices of $S$. It then partitions the color class into independent sets of size $k$ and performs an intervention for each of these partitions. Since the set of vertices not in a vertex cover is an independent set, this is a valid $k$-sparse graph separating system.

When the sparsity $k$ and the maximum degree $\Delta$ are small, Algorithm \ref{k-sparse-alg} is nearly optimal. Using Proposition \ref{k-sparse-lower}, we can establish the following approximation guarantee on the size of the graph separating system created.

\begin{theorem}\label{size-opt-sparse}
Given a chordal graph $G$ with maximum degree $\Delta$, Algorithm \ref{k-sparse-alg} finds a $k$-sparse graph separating system of size $m_k$ such that
\[
m_k \leq \left(1 + \frac{k (\Delta + 1)\Delta}{n}\right)\mathrm{OPT},
\]
where $\mathrm{OPT}$ is the size of the smallest $k$-sparse graph separating system.
\end{theorem}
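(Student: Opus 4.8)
The plan is to sandwich $m_k$ between the minimum vertex cover size $\tau$ and $\mathrm{OPT}$, using Proposition~\ref{k-sparse-lower} for the lower bound and a sharpened chromatic-number estimate for the upper bound. Write $S$ for the minimum vertex cover chosen in Algorithm~\ref{k-sparse-alg}, so $|S|=\tau$, and $G_S$ for the induced subgraph. The algorithm colors $G_S$ optimally and splits each of the $\chi(G_S)$ color classes $C_i$ into $\lceil |C_i|/k\rceil$ pieces; since $\lceil |C_i|/k\rceil \le |C_i|/k + 1$ and $\sum_i|C_i| = \tau$, this gives
\[
m_k \le \frac{\tau}{k} + \chi(G_S).
\]
(It is worth first recording why the output is a valid $k$-sparse separating system: for any edge $uv$ some endpoint lies in $S$, in a color class distinct from that of the other endpoint, hence in an intervention piece that omits the other endpoint, so $uv$ is cut.)

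The first substantive step is to improve the trivial bound $\chi(G_S)\le\Delta+1$ to $\chi(G_S)\le\Delta$. Since $G_S$ is chordal, $\chi(G_S)=\omega(G_S)$, so I only need to exclude a $(\Delta+1)$-clique inside $G_S$. The vertices of such a clique have degree exactly $\Delta$ in $G$ and therefore no neighbors outside the clique, so they form a whole connected component isomorphic to $K_{\Delta+1}$; but if $S$ contained all $\Delta+1$ of them, deleting one would still leave a vertex cover of $G$, contradicting minimality of $S$. Hence $S$ omits at least one of them, $\omega(G_S)\le\Delta$, and so $m_k \le \tau/k + \Delta$.

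It remains to show the additive $\Delta$ is swallowed by $\varepsilon\cdot\mathrm{OPT}$ with $\varepsilon = k(\Delta+1)\Delta/n$, i.e.\ that $\mathrm{OPT}\ge n/(k(\Delta+1))$. Proposition~\ref{k-sparse-lower} gives $\mathrm{OPT}\ge\tau/k$, so it suffices to show $\tau\ge n/(\Delta+1)$. Here I use that each chordal component of an essential graph is handled on its own, so we may take $G$ connected; then $|E|\ge n-1$, and since each cover vertex meets at most $\Delta$ edges we get $\tau \ge (n-1)/\Delta$, which exceeds $n/(\Delta+1)$ exactly because $n\ge\Delta+1$. Chaining the inequalities,
\[
m_k \le \frac{\tau}{k} + \Delta \le \mathrm{OPT} + \frac{k(\Delta+1)\Delta}{n}\,\mathrm{OPT} = \left(1 + \frac{k(\Delta+1)\Delta}{n}\right)\mathrm{OPT},
\]
the edgeless case $\Delta=0$ being trivial ($m_k=\mathrm{OPT}=0$).

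The place demanding the most care is the final lower bound on $\tau$: connectedness of $G$ is essential, since padding with isolated vertices drives $\tau/k$ far below $n/(k(\Delta+1))$ while the algorithm still spends $\Theta(\chi(G_S))$ interventions. With the connectedness convention the estimate $\tau\ge(n-1)/\Delta$ is just tight enough provided one also has the refined bound $\chi(G_S)\le\Delta$, so the chromatic-number step and the vertex-cover step must be done at their sharp versions rather than their crude ones; everything else is bookkeeping on ceilings.
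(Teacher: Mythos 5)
Your argument follows the same skeleton as the paper's: bound the algorithm's output by $\tau/k$ plus the number of color classes of $G_S$, then absorb the additive term into $\varepsilon\cdot\mathrm{OPT}$ via Proposition~\ref{k-sparse-lower} and a lower bound on $\tau$ in terms of $n$ and $\Delta$. The two estimates you use are sharper than the paper's, though, and the sharpening is not cosmetic. The paper colors $G_S$ with $\Delta+1$ colors and then invokes $\tau \ge n/\Delta$; that inequality is false in general (for the star $K_{1,\Delta}$, which is chordal and connected, $\tau = 1$ while $n/\Delta = (\Delta+1)/\Delta > 1$), and with the correct bound $\tau \ge n/(\Delta+1)$ the paper's chain only absorbs an additive $\Delta$, not $\Delta+1$. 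Your observation that a minimum vertex cover must omit a vertex of any $K_{\Delta+1}$ (which is necessarily an entire component), so that $\chi(G_S) = \omega(G_S) \le \Delta$ by perfection of chordal graphs, supplies exactly the missing unit and makes the stated constant go through. Your insistence on connectedness (really, the absence of isolated vertices) is also warranted rather than pedantic: padding the instance with isolated vertices genuinely falsifies the theorem as stated --- take $K_{\Delta+1}$ plus many isolated vertices with $k=\Delta$, where the algorithm outputs $\Delta$ singleton interventions but a binary-code construction gives $\mathrm{OPT} \le \lceil \log_2(\Delta+1)\rceil$ --- so this hypothesis, left implicit in the paper via its footnote on treating chordal components separately, must be read into the statement. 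In short, your proof is correct, takes the same route, and as written is the version of the argument that actually delivers the claimed constant.
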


See Appendix \ref{k-sparse-appendix} for the proof. If the sparsity constraint $k$ and the maximum degree of the graph $\Delta$ both satisfy $k, \Delta = o(n^{1/3})$, then Theorem \ref{size-opt-sparse} implies that we have a $1 + o(1)$ approximation to the optimal solution.

One interesting aspect of Algorithm \ref{k-sparse-alg} is that every vertex is only intervened on once and the set of elements not intervened on is the maximum cardinality independent set. By a similar argument to Theorem 2 of \cite{Kocaoglu2017a}, we have that this algorithm is optimal in the unweighted case.

\begin{corollary}
Given an instance of the minimum cost $k$-sparse intervention design problem with chordal graph $G$ with maximum degree $\Delta$ and vertex cover of size $\tau$, sparsity constraint $k$, size constraint $m \geq \frac{\tau}{k}(1 + \frac{k (\Delta + 1)\Delta}{n})$, and all vertex weights $w_v = 1$, Algorithm \ref{k-sparse-alg} returns a solution with optimal cost.
\end{corollary}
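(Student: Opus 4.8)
The plan is a squeeze argument: determine the exact cost of the $k$-sparse graph separating system returned by Algorithm~\ref{k-sparse-alg}, verify it is feasible under the stated size bound, and prove a matching lower bound on $\mathrm{cost}(\mathcal{I})$ for \emph{every} feasible $k$-sparse graph separating system. Since $w_v = 1$ for all $v$, the objective is just the number of vertex--intervention incidences, $\mathrm{cost}(\mathcal{I}) = \sum_{I \in \mathcal{I}}|I| = \sum_{v \in V}\bigl|\{\,I \in \mathcal{I} : v \in I\,\}\bigr|$, so the whole argument hinges on which vertices are touched and how often.

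I would first read off the output of Algorithm~\ref{k-sparse-alg}. The color classes of the optimal coloring of $G_S$ partition the minimum vertex cover $S$, and the algorithm merely subdivides each class into blocks of size at most $k$, so the returned interventions partition $S$; hence every $v \in S$ lies in exactly one intervention and every $v \notin S$ in none, giving $\mathrm{cost}(\mathcal{I}) = |S| = \tau$. For feasibility, validity holds because $V \setminus S$ is the complement of a vertex cover and so is independent, whence every edge has an endpoint in $S$ and is cut by the block containing that endpoint; and the number of returned interventions is at most $\tfrac{\tau}{k} + \chi(G_S)$ (a color class of size $s$ contributes at most $s/k + 1$ blocks), which by the analysis behind Theorem~\ref{size-opt-sparse} is in turn at most $\tfrac{\tau}{k}\bigl(1 + \tfrac{k(\Delta+1)\Delta}{n}\bigr) \le m$; one pads with empty interventions if a design of size exactly $m$ is demanded.

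The substantive step is the cost lower bound, which adapts Theorem~2 of \cite{Kocaoglu2017a}. Let $\mathcal{I}$ be any $k$-sparse graph separating system for $G$ and let $T = \{\,v \in V : v \notin I \text{ for all } I \in \mathcal{I}\,\}$ be the set of untouched vertices. If an edge $uv$ had both endpoints in $T$, it would lie in no cut $\delta(I)$, contradicting $\bigcup_I \delta(I) = E$; hence $T$ is independent and $|T| \le \alpha = n - \tau$. So at least $\tau$ vertices appear in at least one intervention, and — all weights being $1$ — each contributes at least $1$ to $\mathrm{cost}(\mathcal{I})$, whence $\mathrm{cost}(\mathcal{I}) \ge \tau$. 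Together with the previous paragraph, Algorithm~\ref{k-sparse-alg} achieves the optimum $\tau$ within the size budget, proving the corollary.

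The one delicate point is the feasibility inequality, i.e.\ extracting from the proof of Theorem~\ref{size-opt-sparse} the absolute bound $\tfrac{\tau}{k} + \chi(G_S) \le \tfrac{\tau}{k}\bigl(1 + \tfrac{k(\Delta+1)\Delta}{n}\bigr)$, which reduces to $\chi(G_S) \le \tfrac{\tau(\Delta+1)\Delta}{n}$. This follows from $\chi(G_S) = \omega(G_S) \le \Delta$ (a $K_{\Delta+1}$ in $G$ would use up the entire degree budget of each of its vertices, so it is its own connected component and its minimum-vertex-cover-induced subgraph is only a $K_\Delta$) together with $\tau \ge n/(\Delta+1)$, valid once isolated vertices — which never matter for separating systems — are discarded. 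The cost lower bound itself is short once one sees that untouched vertices must form an independent set. I would close by noting that $w_v \equiv 1$ is essential: for general weights the lower bound becomes $\sum_{v \notin T} w_v$, minimized when $T$ is a maximum-\emph{weight} independent set, whereas Algorithm~\ref{k-sparse-alg} spares only a maximum-\emph{cardinality} one, so it need not be optimal in the weighted case — precisely the gap addressed by other means earlier in the paper.
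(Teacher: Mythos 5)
Your proof is correct and is exactly the argument the paper gestures at (it gives only a one-line pointer to Theorem~2 of \cite{Kocaoglu2017a}): the algorithm touches each vertex of the minimum vertex cover exactly once for a cost of $\tau$, while any $k$-sparse graph separating system must leave untouched only an independent set, so every feasible design has cost at least $n-\alpha=\tau$. Your feasibility check via $\chi(G_S)\le\Delta$ and $\tau\ge n/(\Delta+1)$ reaches the same bound as the paper's proof of Theorem~\ref{size-opt-sparse}, and your closing remark about why unit weights are essential correctly identifies the cardinality-versus-weight gap that motivates Algorithm~\ref{weighted-k-sparse-alg}.
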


We show one way to extend Algorithm \ref{k-sparse-alg} to the weighted case. There is a trade off between the size and the weight of the independent set of vertices that are never intervened on. We can trade these off by adding a penalty $\lambda$ to every vertex weight, i.e., the new weight $w_v^\lambda$ of a vertex $v$ is $w_v^\lambda = w_v + \lambda$. Larger values of $\lambda$ will encourage independent sets of larger size. See Algorithm \ref{weighted-k-sparse-alg} for the pseudocode describing this algorithm. We can run Algorithm \ref{weighted-k-sparse-alg} for various values of $\lambda$ to explore the trade off between cost and size.

\begin{algorithm}
\begin{algorithmic}
\State Input: chordal graph $G$, sparsity constraint $k$, vertex weights $w_v$, penalty parameter $\lambda$
\State $S \gets \text{minimum weight vertex cover $S$ using weights $w_v^\lambda = w_v + \lambda$}$.
\State $G_S \gets \text{induced graph of $S$ in $G$}$.
\State Find an optimal coloring of $G_S$.
\State Split the color classes of $G_S$ into size $k$ intervention sets $I_1, I_2, \ldots, I_m$.
\State Return $\mathcal{I} = \{I_1, I_2, \ldots, I_m\}$.
\end{algorithmic}
\caption{Algorithm for Weighted Min Cost $k$-Sparse Intervention Design}
\label{weighted-k-sparse-alg}
\end{algorithm}

\section{Experiments}

We generate chordal graphs following the procedure of \cite{Shanmugam2015}, however we modify the sampling algorithm so that we can control the maximum degree. First we order the vertices $\{v_1, v_2, \ldots, v_n\}$. For vertex $v_i$ we choose a vertex from $\{v_{i - b}, v_{i - b + 1}, \ldots, v_{i - 1}\}$ uniformly at random and add it to the neighborhood of $v_i$. We then go through the vertices $\{v_{i - b}, v_{i - b + 1}, \ldots, v_{i - 1}\}$ and add them to the neighborhood of $v_i$ with probability $\frac{d}{b}$. We then add edges so that the neighbors of $v_i$ in $\{v_1, v_2, \ldots, v_{i-1}\}$ form a clique. This is guaranteed to be a connected chordal graph with maximum degree bounded by $2b$.

In our first experiment we compare the greedy algorithm to two other algorithms. One first assigns the maximum weight independent set the weight 0 color, then finds a minimum coloring of the remaining vertices, sorts the independent sets by weight, then assigns the cheapest colors to the independent sets of the highest weight. The other algorithm finds the optimal solution with integer programming using the Gurobi solver\cite{gurobi}. The integer programming formulation is standard (see, e.g., \citep{delle2016polyhedral}).

We compare the cost of the different algorithms when we (a) adjust the number of vertices while maintaining the average degree and (b) adjust the average degree while maintaining the number of vertices. We see that the greedy coloring algorithm performs almost optimally. We also see that it is able to find a proper coloring even with only $m = 5$ interventions and no quantization. See Figure \ref{experiment-fig} for the complete results.

\begin{figure}[h]
    \centering
    \begin{subfigure}[We adjust the number of vertices. The average degree stays close to $10$ for all values of the number of vertices.
    ]{
\begin{tikzpicture}
\begin{axis}[every axis plot/.append style={ultra thick},
legend style={at={(0.0,1.0)}, anchor=north west, draw=none, fill=none},
xlabel=number of vertices, ylabel=cost, title style={align=center}, title={Num Vertices vs Cost \\ in Min Cost Intervention Design},
width=2.4in]
 \addplot [dotted, mark=none, color=myorange]
 plot [error bars/.cd, y dir = both, y explicit,
        error bar style={line width=1pt, solid},
        error mark options={
            rotate=90,
            mark size=2pt,
            line width=1pt
    }]
 table[x =x, y =y, y error =ey]{costdata/chi.dat};
\addplot [mark=none, color=mygreen]
 plot [error bars/.cd, y dir = both, y explicit,
 error bar style={line width=1pt, solid},
        error mark options={
            rotate=90,
            mark size=2pt,
            line width=1pt
    }]
 table[x =x, y =y, y error =ey]{costdata/greedy.dat};
  \addplot [dashed, mark=none, color=mypurple]
 plot [error bars/.cd, y dir = both, y explicit,
 error bar style={line width=1pt, solid},
        error mark options={
            rotate=90,
            mark size=2pt,
            line width=1pt
    }]
 table[x =x, y =y, y error =ey]{costdata/ip.dat};
 \legend{Baseline, Greedy, Optimal}
\end{axis}
\end{tikzpicture}}
    \end{subfigure}
    ~ 
    \begin{subfigure}[The number of vertices are fixed at $500$. We adjust the sparsity parameter in the graph generator to see how the algorithms perform for varying graph densities.]{
    \begin{tikzpicture}
\begin{axis}[every axis plot/.append style={ultra thick},
legend style={at={(0.0,1.0)}, anchor=north west, draw=none, fill=none},
xlabel=average degree, ylabel=cost, title style={align=center}, title={Average Degree vs Cost \\ in Min Cost Intervention Design},
width=2.4in]
 \addplot [dotted, mark=none, color=myorange]
 plot [error bars/.cd, y dir = both, y explicit,
        error bar style={line width=1pt, solid},
        error mark options={
            rotate=90,
            mark size=2pt,
            line width=1pt
    }]
 table[x =x, y =y, y error =ey]{costdata/chi_degree.dat};
\addplot [mark=none, color=mygreen]
 plot [error bars/.cd, y dir = both, y explicit,
 error bar style={line width=1pt, solid},
        error mark options={
            rotate=90,
            mark size=2pt,
            line width=1pt
    }]
 table[x =x, y =y, y error =ey]{costdata/greedy_degree.dat};
  \addplot [dashed, mark=none, color=mypurple]
 plot [error bars/.cd, y dir = both, y explicit,
 error bar style={line width=1pt, solid},
        error mark options={
            rotate=90,
            mark size=2pt,
            line width=1pt
    }]
 table[x =x, y =y, y error =ey]{costdata/ip_degree.dat};
 \legend{Baseline, Greedy, Optimal}
\end{axis}
\end{tikzpicture}
}
    \end{subfigure}
    \caption{We generate random chordal graphs such that the maximum degree is bounded by $20$. The node weights are generated by the heavy-tailed Pareto distribution with scale parameter $2.0$. The number of interventions $m$ is fixed to 5. We compare the greedy algorithm to the optimal solution and the baseline algorithm mentioned in the experimental setup. We see that the greedy algorithm is close to optimal and outperforms the baseline. We also see that the greedy algorithm is able to find a solution with the available number of colors, even without quantization.
    }
    \label{experiment-fig}
\end{figure}
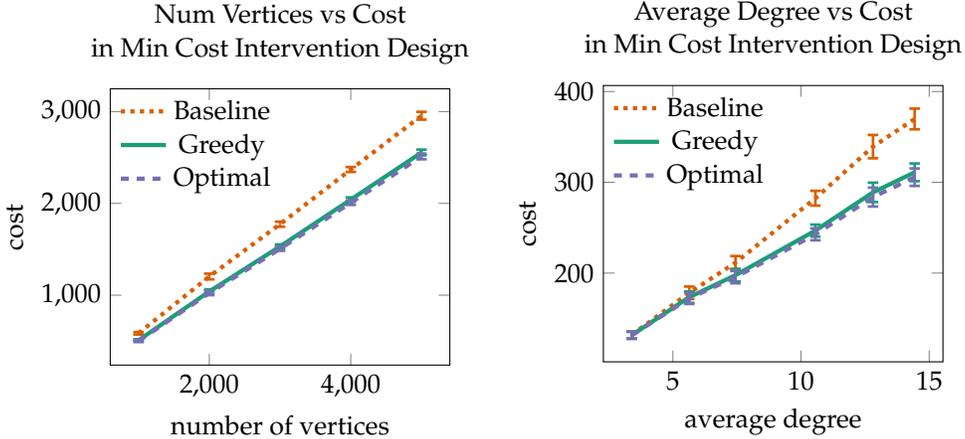

In our second experiment we see how Algorithm \ref{weighted-k-sparse-alg} allows us to trade off the number of interventions and the cost of the interventions in the $k$-sparse minimum cost intervention design problem. See Figure \ref{k-sparse-experiment} for the results.

\begin{figure}[h]
\centering
\begin{tikzpicture}
\begin{axis}[every axis plot/.append style={ultra thick},
legend style={at={(0.0,1.0)}, anchor=north west, draw=none},
xlabel=number of interventions, ylabel=cost,
title style={align=center},
title={Num Interventions vs. Cost in $k$-Sparse\\Min Cost Intervention Design},
width=3in]
 
 \addplot [solid, mark=none, color=black]
 plot [error bars/.cd, y dir = both, y explicit,
 error bar style={line width=1pt, solid},
        error mark options={
            rotate=90,
            mark size=2pt,
            line width=1pt
    }]
 table[x =x, y =y]{pareto-k-sparse.dat};
\end{axis}
\end{tikzpicture}
\caption{We sample graphs of size $10000$ such that the maximum degree is bounded by $20$ and the average degree is $3$. We draw the weights from the heavy-tailed Pareto distribution with scale parameter $2.0$. We restrict all interventions to be of size $10$. We adjust the penalty parameter in Algorithm \ref{weighted-k-sparse-alg} to see how the size of the $k$-sparse graph separating system relates to the cost. Costs are normalized so that the largest cost is $1.0$. We see that with $561$ interventions we can achieve a cost of $0.78$ compared to a cost of $1.0$ with $510$ interventions. Our lower bound implies that we need $506$ interventions on average.}
\label{k-sparse-experiment}
\end{figure}
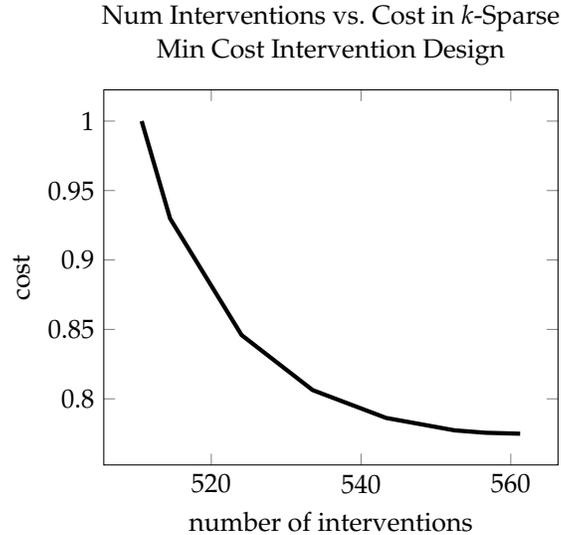

Finally, we observe the empirical running time of the greedy algorithm. We generate graphs on $10,000$ vertices with maximum degree 20 and have 5 interventions. The greedy algorithm terminates in $5$ seconds. In contrast, the integer programming solution takes $128$ seconds using the Gurobi solver \cite{gurobi}.

\FloatBarrier

\section*{Acknowledgments}

This material is based upon work supported by the National Science Foundation Graduate Research Fellowship under Grant No. DGE-1110007. This research has been supported by NSF Grants CCF 1422549, 1618689,
DMS 1723052, CCF 1763702, ARO YIP W911NF-14-1-0258 and research gifts by Google, Western Digital, and NVIDIA.

\bibliographystyle{plainnat}
\bibliography{research}

\clearpage

\begin{appendix}

\begin{center}
\LARGE{\textbf{Appendix}}
\end{center}

\section{Example Graph Where Quantization Helps Greedy}\label{quantization-appendix}

\begin{figure}[h]
    \centering
    \includegraphics[width=3in]{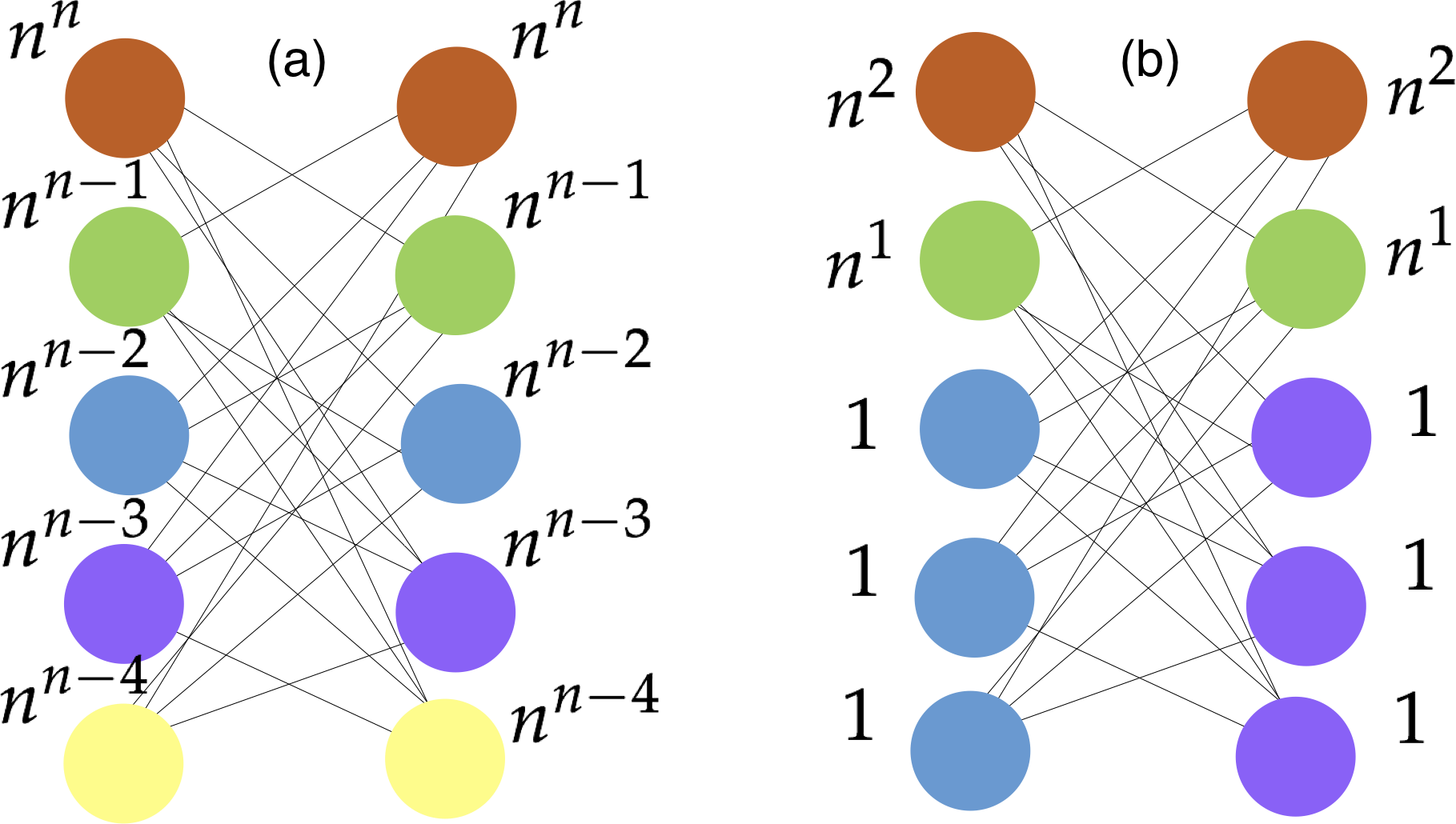}
    \caption{When there are very large weights, the greedy algorithm may require a lot of colors to terminate, even on graphs with a small chromatic number. For this graph, the largest independent set is the top two vertices, followed by the next two, and so on. The greedy algorithm will color all these pairs of vertices a different color, which is $\frac{n}{2}$ colors. However after quantization the the greedy algorithm will only use $4$ colors.}
    \label{quantization-reason-fig}
\end{figure}

\section{Proof of Approximation Guarantees of the Quantized Greedy Algorithm}\label{greedy-approx-appendix}

In this section we prove our approximation guarantee of using the quantized greedy algorithm for minimum cost intervention design.

\begin{reptheorem}{approximation}
If the number of interventions $m$ satisfies $m \geq \log \chi + \log \log n + 5$, then the greedy algorithm with quantization for the minimum cost intervention design problem creates a graph separating system $\mathcal{I}_{\mathrm{greedy}}$ such that
\[
\mathrm{cost}(\mathcal{I_{\mathrm{greedy}}}) \leq (2 + \varepsilon)\mathrm{OPT},
\]
where $\varepsilon = \exp(-\Omega(m)) + n^{-1}$.
\end{reptheorem}

\subsection{Submodularity Background}

Our proof uses several results from submodularity. A set function $F$ over a ground set $V$ is a function that takes as input a subset of $V$ and outputs a real number. We say that the function $F$ is \textit{submodular} if for all $v \in V$ and $A \subseteq B \subseteq V \setminus \{v\}$ the function satisfies the diminishing returns property
\[
F(A \cup \{v\}) - F(A) \geq F(B \cup \{v\}) - F(B).
\]
We say that the function $F$ is monotone if for all $A \subseteq B \subseteq V$ we have that $F(A) \leq F(B)$. We say that $F$ is \textit{non-negative} if for all $A \subseteq V$ we have that $F(A) \geq 0$.

One classic problem in submodular optimization is finding a set $A$ with caridinality constraint $\vert A \vert \leq k$ that maximizes a submodular, monotone, and non-negative function $F$. The greedy algorithm starts with the emptyset $A_0 = \emptyset$, selects the item $v_{k+1} = \argmax_{v \in V} F(A_k \cup \{v\}) - F(A_k)$. It then updates $A_{k+1} = A_k \cup \{v_{k+1}\}$.

The classic result of Nemhauser and Wolsey established that the greedy algorithm is a $(1 - 1/e)$-approximation algorithm to the optimal \cite{nemhauser1978analysis}. Krause and Golovin generalized this to show that if the greedy algorithm selects $\lceil Ck \rceil$ elements for some positive value $C$, then it is a $(1 - e^{-C})$-approximation to the optimal solution of size $k$.

\begin{theorem}[\cite{nemhauser1978analysis, krause2014submodular}]\label{submodular-guarantees}
Given a submodular, monotone, and non-negative function $F$ over a ground set $V$ and a cardinality constraint $k$, let $\mathrm{OPT}$ be defined as
\[
\mathrm{OPT} = \max_{A \subseteq V: \vert A \vert \leq k} F(A).
\]
If the greedy algorithm for this problem runs for $\lceil Ck \rceil$ iterations, for some positive value $C$, it returns a set $A_\mathrm{greedy}^{Ck}$ such that
\[
F(A_\mathrm{greedy}^{Ck}) \geq (1 - e^{-C})\mathrm{OPT}.
\]
\end{theorem}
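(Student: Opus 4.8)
The plan is to run the classical telescoping analysis of Nemhauser and Wolsey, but iterated past $k$ steps. Fix an optimal set $A^{*}$ with $|A^{*}| \le k$ and $F(A^{*}) = \mathrm{OPT}$, and write $A_i$ for the greedy set after $i$ iterations, so $A_0 = \emptyset$ and $A_{i+1} = A_i \cup \{v_{i+1}\}$ where $v_{i+1}$ maximizes the marginal gain $F(A_i \cup \{v\}) - F(A_i)$ over $v \in V$. Let $t = \lceil Ck\rceil$, so that $A_{\mathrm{greedy}}^{Ck} = A_t$.

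The first key step is the one-step progress bound: for every $i$,
\[
\mathrm{OPT} - F(A_{i+1}) \le \Bigl(1 - \tfrac{1}{k}\Bigr)\bigl(\mathrm{OPT} - F(A_i)\bigr).
\]
To prove this I would start from monotonicity, $\mathrm{OPT} = F(A^{*}) \le F(A^{*} \cup A_i)$, and then write $F(A^{*}\cup A_i) - F(A_i)$ as a telescoping sum of the marginal gains obtained by adjoining the elements of $A^{*}\setminus A_i$ to $A_i$ one at a time. Submodularity bounds each such term by the marginal gain of that element with respect to $A_i$ itself, and each of those is at most $F(A_{i+1}) - F(A_i)$ by the greedy selection rule; since $|A^{*}\setminus A_i| \le k$, this gives $\mathrm{OPT} - F(A_i) \le k\bigl(F(A_{i+1}) - F(A_i)\bigr)$, which rearranges to the displayed inequality.

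The second step is to iterate the recursion. Setting $\delta_i = \mathrm{OPT} - F(A_i)$, we get $\delta_t \le (1 - 1/k)^t \delta_0$, and non-negativity of $F$ gives $\delta_0 = \mathrm{OPT} - F(\emptyset) \le \mathrm{OPT}$. Using $1 - x \le e^{-x}$ together with $t = \lceil Ck\rceil \ge Ck$,
\[
\delta_t \le \Bigl(1 - \tfrac{1}{k}\Bigr)^{Ck}\mathrm{OPT} \le e^{-C}\,\mathrm{OPT},
\]
so $F(A_{\mathrm{greedy}}^{Ck}) = F(A_t) \ge (1 - e^{-C})\mathrm{OPT}$, which is the claim.

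I expect the only genuine obstacle to be the bookkeeping in the decomposition step: one must apply submodularity to a nested chain of sets in the correct order and use the bound $|A^{*}\setminus A_i| \le k$ (rather than $|A^{*}| \le k$ exactly), so that the argument remains valid when greedy has already picked some elements of $A^{*}$ or when $|A^{*}| < k$. The passage from $(1 - 1/k)^t$ to $e^{-C}$ is routine once $\lceil Ck\rceil \ge Ck$ is noted. Since the statement is exactly the ``$1 - e^{-C}$'' generalization recorded in \cite{krause2014submodular}, one could alternatively cite it directly rather than reproduce the argument.
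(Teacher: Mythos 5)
Your proposal is correct: the one-step bound $\mathrm{OPT} - F(A_{i+1}) \le (1 - 1/k)(\mathrm{OPT} - F(A_i))$ via the telescoping/submodularity decomposition over $A^{*}\setminus A_i$, followed by iterating the recursion for $\lceil Ck\rceil$ steps and using $(1-1/k)^{Ck} \le e^{-C}$, is exactly the standard Nemhauser--Wolsey argument and its extension recorded by Krause and Golovin. The paper itself gives no proof of this theorem --- it imports it by citation --- so your reconstruction is precisely the argument the paper is relying on, and your attention to using $|A^{*}\setminus A_i| \le k$ rather than $|A^{*}|$ is the right place to be careful.
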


Another important problem in submodular function optimization is the \textit{submodular set cover} problem, which is a generalization of the set cover problem. Given a submodular, monotone, and non-negative function $F$ that maps a subsets of a ground set $V$ to integers, we want to find a set $A$ of minimum cardinality such that $F(A) = F(V)$. The greedy algorithm is a natural approach to solve this problem: we run greedy iterations until the set satisfies the submodular set cover constraint. Let $w_\mathrm{max} = \argmax_{v \in V} F(\{v\})$. Wolsey established that the cardinality of the set returned by the greedy algorithm is a $1 + \ln w_\mathrm{max}$ approximation to the minimum cardinality solution \cite{wolsey1982analysis}.

\begin{theorem}[\cite{wolsey1982analysis}]\label{submodular-cover}
Given a submodular, monotone, and non-negative function $F$ that maps subsets of a ground set $V$ to integers, let $\mathrm{OPT}$ be defined as
\[
\mathrm{OPT} = \min_{A \subseteq V: F(A) = F(V)} \vert A \vert.
\]
Let $w_\mathrm{max} = \argmax_{v \in V} F(\{v\})$. The greedy algorithm for this problem returns a set $A_\mathrm{greedy}$ such that
\[
\vert A_\mathrm{greedy} \vert \leq (1 + \ln w_\mathrm{max}) \mathrm{OPT}.
\]
\end{theorem}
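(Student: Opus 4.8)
The plan is to reproduce the standard primal charging analysis for submodular set cover. First I would normalize so that $F(\emptyset) = 0$: replacing $F$ by $F - F(\emptyset)$ leaves the covering condition $F(A) = F(V)$ unchanged and only decreases $w_{\max} = \max_{v} F(\{v\})$ and $F(V)$, so a bound for the shifted function implies the claim. After this normalization, monotonicity and submodularity make every marginal $F(A\cup\{v\}) - F(A)$ an integer in $[0, w_{\max}]$. I would then fix an optimal cover $A^\star$ with $|A^\star| = \mathrm{OPT}$, let $A_i$ be the greedy set after $i$ steps, and track the residual $\rho_i := F(V) - F(A_i)$ and the greedy gains $\Delta_i := F(A_i) - F(A_{i-1}) = \rho_{i-1} - \rho_i$. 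Everything rests on two estimates I would establish first: (i) a submodular averaging bound $\rho_{i-1} \le \mathrm{OPT}\cdot\Delta_i$, and (ii) the subadditivity bound $F(V) \le \mathrm{OPT}\cdot w_{\max}$.

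For (i), I would note that by monotonicity $F(A_{i-1}\cup A^\star) = F(V)$ (it is sandwiched between $F(A^\star) = F(V)$ and $F(V)$), then expand $F(A_{i-1}\cup A^\star) - F(A_{i-1})$ as a telescoping sum over the elements of $A^\star \setminus A_{i-1}$ added one at a time and bound each term by that element's marginal over $A_{i-1}$ (submodularity), obtaining $\rho_{i-1} \le \sum_{v\in A^\star}\big(F(A_{i-1}\cup\{v\}) - F(A_{i-1})\big) \le \mathrm{OPT}\cdot\Delta_i$, since greedy selects the element of largest marginal gain. Rearranging gives $\rho_i \le (1 - 1/\mathrm{OPT})\rho_{i-1}$, and since the $\rho_i$ are nonnegative integers this already shows greedy terminates in some $\ell$ steps with the weaker bound $\ell \le \mathrm{OPT}(1 + \ln F(V))$. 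Estimate (ii) is immediate: $F(V) = F(A^\star) \le \sum_{v\in A^\star} F(\{v\}) \le \mathrm{OPT}\cdot w_{\max}$, the first inequality being subadditivity (a consequence of submodularity together with $F(\emptyset) = 0$).

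To sharpen $\ln F(V)$ to $\ln w_{\max}$, I would run an amortized charging argument. Think of the $F(V)$ units of total value as tokens; give each greedy step a budget of $1$, and when step $i$ acquires its $\Delta_i$ fresh tokens, charge each of them $1/\Delta_i$, so that the total charge is exactly $\ell$. Give a token "rank" $r \in \{1,\dots,F(V)\}$ equal to the number of still-unacquired tokens, itself included, at the instant it is acquired; the tokens acquired in step $i$ then carry ranks $\rho_i + 1, \dots, \rho_{i-1}$. A token of rank $r$ acquired in step $i$ satisfies $r \le \rho_{i-1} \le \mathrm{OPT}\cdot\Delta_i$, so its charge $1/\Delta_i \le \mathrm{OPT}/r$; it also satisfies $1/\Delta_i \le 1$ because $\Delta_i \ge 1$. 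Summing, $\ell \le \sum_{r=1}^{F(V)} \min(1,\mathrm{OPT}/r) \le \mathrm{OPT} + \sum_{r=\mathrm{OPT}+1}^{F(V)} \tfrac{\mathrm{OPT}}{r} \le \mathrm{OPT}\big(1 + \ln(F(V)/\mathrm{OPT})\big)$, and substituting (ii), $F(V)/\mathrm{OPT} \le w_{\max}$, yields $\ell \le \mathrm{OPT}(1 + \ln w_{\max})$.

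The hard part will be precisely this last refinement: the naive geometric-decay recursion on $\rho_i$ only delivers $\ln F(V)$, which can be far larger than $\ln w_{\max}$, so one needs the two-regime charge (bounding each token's cost by $\min(1,\mathrm{OPT}/r)$) combined with the subadditivity estimate $F(V)\le \mathrm{OPT}\cdot w_{\max}$ to close the gap. An alternative would be Wolsey's original LP/dual-fitting proof — bound $\ell$ by the optimum of an LP relaxation of the covering problem and exhibit a feasible dual solution of matching value — but the primal charging argument above is more self-contained, and everything it uses (monotonicity, submodularity, integrality) is already available.
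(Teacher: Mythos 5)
Your proof is correct. Note, however, that the paper does not prove this statement at all: Theorem \ref{submodular-cover} is imported verbatim from Wolsey \cite{wolsey1982analysis} and used as a black box (in the lemma bounding the number of colors the quantized greedy algorithm needs), so there is no in-paper argument to compare against. What you have written is a valid self-contained derivation of the cited bound. Your route is the primal charging/token argument: the key inequality $\rho_{i-1}\le \mathrm{OPT}\cdot\Delta_i$ from submodular averaging over an optimal cover, the subadditivity estimate $F(V)\le \mathrm{OPT}\cdot w_{\max}$, and the two-regime charge $\min(1,\mathrm{OPT}/r)$ per unit of value, summing to $\mathrm{OPT}\bigl(1+\ln(F(V)/\mathrm{OPT})\bigr)\le\mathrm{OPT}(1+\ln w_{\max})$. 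Wolsey's original proof instead goes through an LP relaxation with a dual-fitting/recursive analysis and yields the slightly sharper harmonic-number ratio $H(w_{\max})=\sum_{i=1}^{w_{\max}}1/i$; your argument trades that refinement for self-containedness, and $1+\ln w_{\max}\ge H(w_{\max})$ is exactly the form the paper states and uses, so nothing downstream is affected. The only cosmetic points worth fixing: the theorem statement's $w_{\max}=\argmax_v F(\{v\})$ is a typo for $\max_v F(\{v\})$, which you silently (and correctly) adopt, and the degenerate case $w_{\max}\le 1$ after your normalization should be dispatched separately since $\ln w_{\max}\le 0$ there (your own bound $\ell\le F(V)\le\mathrm{OPT}\cdot w_{\max}$ already handles it).
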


\subsection{Bound on the Quantized Greedy Algorithm solution size}

In this section we show that after $\chi (2 + 5\ln n) + 1$ rounds the greedy algorithm with quantization will have colored every vertex in the graph. Since the number of possible colors in a graph separating system of size $m$ is $2^m$, this implies that when $m \geq \log \chi + \log\log n + 4$, there are enough colors for the greedy algorithm to fully color the graph.

\begin{lemma}
If the intervention size is $m \geq \log \chi + \log\log n + 4$, then the greedy algorithm will terminate using at most $2^m$ colors.
\end{lemma}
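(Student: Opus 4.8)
The plan is to recognize the greedy coloring algorithm as a run of the greedy algorithm for the \emph{submodular set cover} problem and then invoke Theorem~\ref{submodular-cover} twice: once for the vertices whose quantized weight is strictly positive, and once to finish off the vertices that quantization sent to weight $0$. Write $w_v$ for the quantized weights, so $w_v \in \{0,1,\dots,n^3\}$ on $G_1 = G - S_0$; recall the algorithm gives the maximum weighted independent set $S_0$ color $0$ and then, on $G_1$, repeatedly extracts a maximum $w$-weight independent set of the current residual graph. I will assume the independent-set oracle breaks weight ties in favor of larger cardinality --- this is cost-neutral, and it is genuinely needed, since removing an arbitrary inclusion-maximal independent set need not lower the chromatic number, even of a chordal graph. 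The number of colors used equals $1$ plus the number of while-loop iterations, so it suffices to bound the iterations by $\chi(2 + 5\ln n)$ and then check $\chi(2 + 5\ln n) + 1 \le 2^m$.

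\emph{Phase 1 (positive-weight vertices).} Let $\mathcal{N}$ be the family of all independent sets of $G_1$ and define the weighted coverage function $F(\mathcal{A}) = \sum_{v \in \bigcup_{A \in \mathcal{A}} A} w_v$; this is monotone, submodular, non-negative, and integer-valued, with $F(\mathcal{N}) = \sum_{v \in V(G_1)} w_v$. A proper $\chi(G_1)$-coloring of $G_1$, with $\chi(G_1) \le \chi$, exhibits at most $\chi$ independent sets whose union is $V(G_1)$, so the submodular-cover optimum for $F$ is at most $\chi$. The marginal gain of adding an independent set $A$ is the $w$-weight of its not-yet-covered vertices, and that uncovered portion is itself an independent set of the residual graph of equal weight; hence a maximizer of the marginal gain over $\mathcal{N}$ is precisely a maximum $w$-weight independent set of the residual graph, so the greedy coloring iterations form a legitimate run of greedy submodular cover on $F$. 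Since every singleton value is $F(\{A\}) = w(A) \le \sum_v w_v \le n \cdot n^3 = n^4$, Theorem~\ref{submodular-cover} gives that after at most $\chi(1 + \ln n^4) = \chi(1 + 4\ln n)$ iterations we have $F = F(\mathcal{N})$, i.e.\ every vertex of $G_1$ with $w_v > 0$ has been colored.

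\emph{Phase 2 (zero-weight vertices).} Let $H$ be the subgraph induced by the vertices still uncolored after Phase 1; all of them have $w_v = 0$, so from now on the oracle returns a maximum-\emph{cardinality} independent set of the residual graph, and the Phase-1 color classes are disjoint from $V(H)$, so the cover restarts cleanly. Here $H$ is chordal, has at most $n$ vertices, and $\chi(H) \le \chi$. Apply Theorem~\ref{submodular-cover} again with the cardinality coverage function $G'(\mathcal{A}) = \bigl|\bigcup_{A \in \mathcal{A}} A\bigr|$ over the independent sets of $H$: it is monotone, submodular, non-negative, and integer-valued, its optimum is at most $\chi(H) \le \chi$, and every singleton value is at most $n$. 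Hence $H$, and therefore $G_1$, is fully colored within $\chi(1 + \ln n)$ additional iterations.

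\emph{Wrap-up.} Summing the two phases, the while loop runs at most $\chi(1 + 4\ln n) + \chi(1 + \ln n) = \chi(2 + 5\ln n)$ times, so the algorithm uses at most $\chi(2 + 5\ln n) + 1$ colors. For $m \ge \log\chi + \log\log n + 4$ we have $2^m \ge 16\,\chi\log_2 n$, and since $5\ln n < 4\log_2 n$ (as $5\ln 2 < 4$) and $2\chi + 1 \le 3\chi \le 3\chi\log_2 n$ for $n \ge 2$, we get $\chi(2 + 5\ln n) + 1 < 7\,\chi\log_2 n \le 2^m$, so there are enough colors. I would flag Phase 2 as the main obstacle: quantization can collapse small weights to $0$, so the weighted submodular-cover bound only controls the positive-weight vertices, and the residual all-zero graph must be colored within $O(\chi\log n)$ extra rounds --- which is exactly why the separate cardinality-coverage argument and the cost-neutral cardinality tie-break in the oracle are required (a plain maximal independent set does not suffice).
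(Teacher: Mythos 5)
Your proof is correct and follows essentially the same route as the paper's: cast the greedy coloring as greedy submodular cover on the weighted coverage function over independent sets, bound the positive-weight phase by $\chi(1+4\ln n)$ via Wolsey's theorem, handle the weight-$0$ remainder as a separate (cardinality) set-cover instance costing $\chi(1+\ln n)$ more rounds, and total $\chi(2+5\ln n)+1$ colors. Your additions --- the cardinality tie-break in the independent-set oracle and the explicit check that $\chi(2+5\ln n)+1\le 2^m$ --- are details the paper leaves implicit, not a different argument.
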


\begin{proof}
The greedy algorithm first colors the maximum weight independent, using $1$ color. We will denote the remaining graph by $G = (V, E)$.

The weights of the remaining vertices are quantized to integers such that the maximum weight is bounded by $n^3$. Let $\mathcal{A}$ be the set of all independent sets in $G$. The maximum weight of an independent set in $G$ is bounded by $n^4$. Let $W$ be the function that takes a set of independent sets $A \subseteq \mathcal{A}$ and outputs the value
\[
W(A) = \sum_{v \in \bigcup_{a \in A} a}w_v,
\]
that is, it takes a set of independent sets and return the sum of the vertices in their union. It can be verified that this function is submodular, monotone, and non-negative.

 We will assume for now that the weights are all positive. If we have a set of independent sets $A$ such that $W(A) = W(\mathcal{A})$, then every vertex in the graph will have been covered. Since the minimum cardinality is $\chi$ and the maximum weight of an independent set is $n^4$, by Theorem \ref{submodular-cover}, the greedy algorithm will terminate after $\chi(1 + 4 \ln n)$ iterations.
 
 To handle vertices of weight $0$, note that it is a set cover problem to cover the remaining vertices. Thus the greedy algorithm will need no more that $\chi(1 + \ln n)$ colors to color the remaining vertices, using a total number of colors $\chi(2 + 5 \ln n) + 1$.
\end{proof}

We have the following corollary by noting that adding an extra intervention doubles the number of allowed colors.

\begin{corollary}\label{termination-corollary}
If the intervention size is $m \geq \log \chi + \log\log n + 5$, then the greedy algorithm will terminate using at most $2^m$ colors such that all color vectors $c$ have weight $\|c\|_1 \leq \lceil \frac{m}{2} \rceil$.
\end{corollary}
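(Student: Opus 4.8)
The plan is to bootstrap directly from the preceding Lemma, spending the one ``extra'' intervention entirely on relabelling the color classes with low-weight binary vectors rather than on coloring more vertices.

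First I would invoke the Lemma with intervention size $m-1$ in place of $m$. Since $m \geq \log\chi + \log\log n + 5$, we have $m-1 \geq \log\chi + \log\log n + 4$, so the argument in the proof of the Lemma shows that the greedy coloring produces at most $\chi(2 + 5\ln n) + 1 \leq 2^{m-1}$ color classes. The key observation is that the greedy procedure itself only outputs a partition of $V$ into independent sets (the integer labels $0,1,2,\dots$); the binary vectors $c(v) \in \{0,1\}^m$ attached to these classes are chosen in the subsequent conversion to a graph separating system, so we are free to pick them in any way we like, provided distinct classes receive distinct vectors (this is exactly what makes the vector assignment a proper coloring, hence a separating system).

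Next I would count the binary vectors of length $m$ of Hamming weight at most $\lceil m/2 \rceil$. This count equals $\sum_{i=0}^{\lceil m/2\rceil}\binom{m}{i}$, and a one-line parity argument — split on whether $m$ is even or odd, and use $\sum_{i=0}^{m}\binom{m}{i} = 2^m$ together with the symmetry $\binom{m}{i} = \binom{m}{m-i}$ — shows this is at least $2^{m-1}$. Since $2^{m-1}$ bounds the number of color classes from the previous paragraph, there are enough such light vectors to assign a distinct one to each class (sending the free class $S_0$ to the all-zeros vector, of weight $0$). Vertices in the same class get the same vector and vertices in different classes get different vectors, so adjacent vertices — being in different classes — get different vectors. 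The result is a proper coloring by vectors of weight at most $\lceil m/2 \rceil$, i.e.\ a valid size-$m$ graph separating system with the claimed property.

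Essentially all of the real work is already contained in the Lemma, so the only genuine content here is the binomial inequality $\sum_{i=0}^{\lceil m/2\rceil}\binom{m}{i} \geq 2^{m-1}$ and the bookkeeping verifying that doubling the number of available colors (one additional intervention) provides precisely the slack needed to confine every color vector to the lighter half of the Boolean cube. I do not expect a real obstacle; the only points to be careful about are the floor/ceiling convention in the weight bound and making sure the inequality $\chi(2 + 5\ln n) + 1 \leq 2^{m-1}$ inherited from the Lemma is applied at $m-1$ rather than $m$.
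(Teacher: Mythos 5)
Your proposal is correct and matches the paper's own (one-line) justification: the paper simply notes that one extra intervention doubles the number of available colors, which is exactly your observation that the at most $2^{m-1}$ color classes guaranteed by the Lemma at $m-1$ can all be relabelled with the at least $2^{m-1}$ binary vectors of weight at most $\lceil m/2\rceil$. Your write-up just makes explicit the binomial inequality and the freedom in the conversion step that the paper leaves implicit.
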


\subsection{Submodular and Supermodular Chain Problem}

In this section we define two new types of submodular optimization problem, which we call the submodular chain problem and the supermodular chain problem. We will use these in our proof of the approximation guarantees of the greedy algorithm with quantization.

\begin{definition}
Given integers $k_1, k_2, \ldots, k_m$ and a submodular, monotone, and non-negative function $F$, over a ground set $V$, the \textit{submodular chain problem} is to find sets $A_1, A_2, \ldots, A_m \subseteq V$ such that $\vert A_i \vert \leq k_i$ that maximizes
\[
\sum_{i = 1}^m F(A_1 \cup A_2, \cup \cdots \cup A_i).
\]
\end{definition}

Throughout this section we will assume that $m$ is an even number.

The greedy algorithm for this problem will first choose the set $A_1$ of cardinality $k_1$ that maximizes $F(A_1)$. It will then choose the set $A_2$ of cardinality $k_2$ that maximizes $F(A_1 \cup A_2)$. It will continue this process until all $A_i$ are chosen.

Note by using the greedy algorithm and Theorem \ref{submodular-guarantees} we can obtain a $(1 - 1/e)$ approximation to this problem. However we will instead use the following guarantee.

\begin{lemma}\label{submodular-chain-lemma}
Let $A_1^*, A_2^*, \ldots, A_m^*$ be the optimal solution to the submodular chain problem. Suppose that for all $1 \leq p \leq m / 2 - 1$ we have that $\sum_{i = 1}^{2p} k_i \geq C \sum_{i=1}^p k_i$. Also assume that $F(A_1 \cup A_2 \cup \cdots \cup A_m) = F(V)$. Then the greedy algorithm for the submodular chain problem returns set $A_1, A_2, \ldots, A_m$ such that
\[
\sum_{i = 0}^{m} F(A_1 \cup A_2 \cup \cdots A_{2i}) \geq F(V) + 2(1 - e^{-C})\sum_{i = 0}^{m/2-1}F(A^*_1 \cup A^*_2 \cup \cdots A^*_i)
\]
\end{lemma}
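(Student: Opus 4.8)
The plan is to derive Lemma~\ref{submodular-chain-lemma} from Theorem~\ref{submodular-guarantees} applied to nested prefixes of the greedy chain, and then to package the resulting per-prefix guarantees through a pairing argument engineered to produce the factor $2$. Write $\ell_j = \sum_{i=1}^{j} k_i$, let $B_j^{g} = A_1 \cup \cdots \cup A_j$ be the greedy prefixes and $B_j^{*} = A_1^{*} \cup \cdots \cup A_j^{*}$ the optimal prefixes, with $B_0 = \emptyset$. Then the chain objective is exactly $\sum_{j=1}^{m} F(B_j)$, we have $|B_j^{*}| \leq \ell_j$, and the greedy chain is nested, $B_1^{g} \subseteq B_2^{g} \subseteq \cdots$. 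Since the greedy algorithm only adds vertices, its prefixes $B_j^{g}$ are precisely the sets produced by the ordinary cardinality-constrained greedy stopped successively at cardinalities $\ell_1, \ell_2, \ldots$, so Theorem~\ref{submodular-guarantees} applies to them verbatim. (I would phrase the chain greedy element by element; reading it as choosing each block $A_i$ optimally only increases each prefix's $F$-value and hence does not weaken the lower bound below.)

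The key per-prefix estimate is that, for every $p$ with $1 \leq p \leq m/2 - 1$,
\[
F(B_{2p}^{g}) \geq (1 - e^{-C})\, F(B_p^{*}).
\]
Indeed, the hypothesis gives $\ell_{2p} \geq C \ell_p$, hence $\ell_{2p} \geq \lceil C \ell_p \rceil$ since $\ell_{2p}$ is an integer; running greedy for $\ell_{2p}$ steps is at least as good as running it for $\lceil C \ell_p \rceil$ steps (more steps give a larger $F$-value), so Theorem~\ref{submodular-guarantees} with cardinality bound $\ell_p$ yields $F(B_{2p}^{g}) \geq (1 - e^{-C}) \max_{|S| \leq \ell_p} F(S) \geq (1 - e^{-C}) F(B_p^{*})$, the last inequality because $|B_p^{*}| \leq \ell_p$. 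Combined with monotonicity of $F$ this also gives $F(B_{2p-1}^{g}) \geq F(B_{2p-2}^{g}) \geq (1 - e^{-C}) F(B_{p-1}^{*})$ for all $1 \leq p \leq m/2$.

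To assemble, split the greedy objective into consecutive pairs,
\[
\sum_{j=1}^{m} F(B_j^{g}) = \sum_{p=1}^{m/2} \Big( F(B_{2p-1}^{g}) + F(B_{2p}^{g}) \Big),
\]
and lower bound each pair: for $p = 1, \ldots, m/2 - 1$ by $(1 - e^{-C})\big( F(B_{p-1}^{*}) + F(B_p^{*}) \big)$ using the estimates above, and for the last pair $p = m/2$ by $(1 - e^{-C}) F(B_{m/2-1}^{*}) + F(V)$, where the term $F(V)$ uses the covering hypothesis $F(A_1 \cup \cdots \cup A_m) = F(V)$ (Theorem~\ref{submodular-guarantees} is unavailable at index $m$, since the hypothesis on the $k_i$ runs only through $p = m/2 - 1$). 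Summing over $p$, each optimal prefix value $F(B_q^{*})$ with $1 \leq q \leq m/2 - 1$ is counted twice, once from the pair at $p = q$ and once from the pair at $p = q+1$, while the $q = 0$ term is $F(\emptyset)$, which is $0$ in the applications of interest and so drops out. This yields exactly $\sum_{j=1}^{m} F(B_j^{g}) \geq F(V) + 2(1 - e^{-C}) \sum_{q=0}^{m/2 - 1} F(B_q^{*})$, which is the asserted inequality (the sum $\sum_{i=0}^{m} F(A_1 \cup \cdots \cup A_{2i})$ in the statement being a rewriting of this prefix sum $\sum_{j=1}^{m} F(B_j^{g})$).

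The main obstacle is the pairing step: one has to group the terms so that exactly the optimal prefixes $B_0^{*}, \ldots, B_{m/2-1}^{*}$ survive, each with coefficient $2$, and the two boundary pairs need separate treatment — at $p = 1$ the lower-order contribution is the vacuous $F(\emptyset)$, and at $p = m/2$ one cannot invoke Theorem~\ref{submodular-guarantees} and must instead rely on the covering hypothesis $F(A_1 \cup \cdots \cup A_m) = F(V)$. Everything else is a direct application of the submodular-maximization guarantee, with the hypothesis $\sum_{i=1}^{2p} k_i \geq C \sum_{i=1}^{p} k_i$ chosen precisely so that the clause of Theorem~\ref{submodular-guarantees} allowing greedy to run for $\lceil Ck \rceil$ iterations is available at each prefix we need.
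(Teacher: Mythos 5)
Your proof is correct and follows essentially the same route as the paper's: the per-prefix bound $F(A_1\cup\cdots\cup A_{2p}) \geq (1-e^{-C})\,F(A_1^*\cup\cdots\cup A_p^*)$ obtained from Theorem~\ref{submodular-guarantees}, followed by a pairing-plus-monotonicity argument that yields the factor $2$ and invokes the covering hypothesis only for the leftover $F(V)$ term. The sole cosmetic difference is that the paper pairs indices $(2i,2i+1)$ and leaves $F(A_1\cup\cdots\cup A_m)=F(V)$ unpaired, whereas you pair $(2p-1,2p)$ and leave $F(\emptyset)$ unpaired --- the same bookkeeping up to the harmless $F(\emptyset)$ term you already flag.
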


\begin{proof}
Since $\sum_{p = 1}^{2i} k_p \geq C \sum_{p=1}^i k_p$, by Theorem \ref{submodular-guarantees} we have that
\[
F(A_1 \cup A_2 \cup \cdots A_{2p}) \geq (1 - e^{-C}) F(A^*_1 \cup A^*_2 \cup \cdots \cup A^*_p).
\]
We thus have
\[
\sum_{i = 0}^{m/2 - 1}F(A_1 \cup A_2 \cup \cdots \cup A_{2i}) \geq (1 - e^{-C})\sum_{i = 0}^{m/2 - 1}F(A^*_1 \cup A^*_2 \cup \cdots \cup A^*_i).
\]

To conclude the proof, use the monotonicity of the submodular function $F$ to observe that
\begin{align*}
    \sum_{i = 0}^m F(A_1 \cup A_2 \cup \cdots \cup A_i)  &= F(A_1 \cup A_2 \cup A_m) \\
    &\qquad +\sum_{i = 0}^{m/2 -1} F(A_1 \cup A_2 \cup \cdots \cup A_{2i}) + F(A_1 \cup A_2 \cup \cdots \cup A_{2i+1}) \\
    &= F(V) + \sum_{i = 0}^{m/2 -1} F(A_1 \cup A_2 \cup \cdots \cup A_{2i}) + F(A_1 \cup A_2 \cup \cdots \cup A_{2i+1}) \\
    &\geq F(V) + 2\sum_{i = 0}^{m/2 -1} F(A_1 \cup A_2 \cup \cdots \cup A_{2i}).
\end{align*}
\end{proof}

We define the supermodular chain problem similarly.

\begin{definition}
Given integers $k_1, k_2, \ldots, k_m$ and a submodular, monotone, and non-negative function $F$, over a ground set $V$, the \textit{supermodular chain problem} is to find sets $A_1, A_2, \ldots, A_m \subseteq V$ such that $\vert A_i \vert \leq k_i$ that minimizes
\[
\sum_{i = 0}^m F(V) - F(A_1 \cup A_2, \cup \cdots \cup A_i).
\]
\end{definition}

We establish the following guarantee for the greedy algorithm on the supermodular chain problem.

\begin{lemma}\label{supermodular-chain-lemma}
Let $A_1^*, A_2^*, \ldots, A_m^*$ be the optimal solution to the supermodular chain problem. Suppose that for all $1 \leq p \leq m / 2 - 1$ we have that $\sum_{i = 1}^{2t} k_i \geq C \sum_{i=1}^t k_i$. Also assume that $F(A_1 \cup A_2 \cup \cdots \cup A_m) = F(V)$. Then the greedy algorithm for the supermodular chain problem returns set $A_1, A_2, \ldots, A_m$ such that
\[
\sum_{i = 0}^m F(V) - F(A_1 \cup A_2 \cup \cdots A_i) \leq e^{-C}mF(V) + 2\sum_{i = 0}^{m}F(A^*_1 \cup A^*_2 \cup \cdots A^*_i)
\]
\end{lemma}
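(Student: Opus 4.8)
The plan is to run the argument in near-perfect parallel with the proof of Lemma~\ref{submodular-chain-lemma}, exploiting that the supermodular chain objective is, up to the additive constant $(m+1)F(V)$, the negation of the submodular chain objective. Consequently ``the greedy algorithm for the supermodular chain problem'' is literally the same procedure as for the submodular chain problem (at each step it picks the available element of largest marginal gain, the $k_i$'s merely dictating where one block ends and the next begins), and the optimizers $A_1^*,\ldots,A_m^*$ of the two problems coincide. So I would begin from the same per-block estimate that drives Lemma~\ref{submodular-chain-lemma}: after blocks $1,\ldots,2p$ the greedy set $A_1\cup\cdots\cup A_{2p}$ consists of $\sum_{i=1}^{2p}k_i \ge C\sum_{i=1}^{p}k_i$ greedily chosen elements, so Theorem~\ref{submodular-guarantees} applied with cardinality parameter $\sum_{i=1}^{p}k_i$ gives
\[
F(A_1\cup\cdots\cup A_{2p}) \;\ge\; (1-e^{-C})\,F(A_1^*\cup\cdots\cup A_p^*)
\]
for every $1\le p\le m/2-1$, and trivially also at $p=0$ (both sides equal $F(\emptyset)$).

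Next I would convert this multiplicative guarantee into an additive bound on the per-step regret $F(V)-F(A_1\cup\cdots\cup A_i)$: subtracting the display above from $F(V)$ and using monotonicity in the form $F(A_1^*\cup\cdots\cup A_p^*)\le F(V)$ yields $F(V)-F(A_1\cup\cdots\cup A_{2p}) \le \big[F(V)-F(A_1^*\cup\cdots\cup A_p^*)\big] + e^{-C}F(V)$. To assemble the full sum $\sum_{i=0}^{m}\big[F(V)-F(A_1\cup\cdots\cup A_i)\big]$ I would pair the indices $\{2p,2p+1\}$ for $p=0,\ldots,m/2-1$ which, together with the leftover index $i=m$, exactly exhausts $\{0,1,\ldots,m\}$ since $m$ is even; bound each odd-index regret $F(V)-F(A_1\cup\cdots\cup A_{2p+1})$ by the preceding even-index regret via monotonicity; and invoke the hypothesis $F(A_1\cup\cdots\cup A_m)=F(V)$ to kill the $i=m$ term. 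This collapses the sum to $2\sum_{p=0}^{m/2-1}\big[F(V)-F(A_1\cup\cdots\cup A_{2p})\big]$; plugging in the additive per-block estimate and then enlarging the optimal-side sum to the full range $0,\ldots,m$ (all added terms being non-negative) produces the stated bound, the additive term $e^{-C}mF(V)$ arising as $2\cdot\tfrac{m}{2}\cdot e^{-C}F(V)$.

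The step I expect to be the main obstacle is precisely the bookkeeping in this last assembly: the guarantee from Theorem~\ref{submodular-guarantees} is multiplicative, and turning the $(1-e^{-C})$ factor into the clean additive slack $e^{-C}mF(V)$ leans on both the crude monotonicity bound $F(\cdot)\le F(V)$ and, crucially, on the covering hypothesis $F(A_1\cup\cdots\cup A_m)=F(V)$ — without the latter the telescoping leaves an uncontrolled $F(V)-F(A_1\cup\cdots\cup A_m)$ term and the $2$-factor argument breaks. One must also keep the even/odd pairing consistent with the parity convention on $m$ maintained throughout this section, and treat the $p=0$ (empty-set) case separately so that Theorem~\ref{submodular-guarantees} is only invoked against genuine cardinality constraints.
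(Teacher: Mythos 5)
Your proposal is correct and is essentially the paper's own argument: the paper simply packages your per-block guarantee from Theorem~\ref{submodular-guarantees} together with the even/odd pairing and the covering hypothesis into Lemma~\ref{submodular-chain-lemma}, and then performs the same subtraction from $(m+1)F(V)$, the same bound $2e^{-C}\sum_{i=0}^{m/2-1}F(A_1^*\cup\cdots\cup A_i^*)\le e^{-C}mF(V)$, and the same enlargement of the optimal-side sum to the full range $0,\ldots,m$. The only difference is presentational (you inline the derivation of the pairing inequality rather than citing it as a separate lemma), so no further changes are needed.
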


\begin{proof}
Starting from Lemma \ref{submodular-chain-lemma}, we have that
\begin{align*}
(m+1)F(V) - \sum_{i = 0}^{m} F(A_1 \cup A_2 \cup \cdots A_{2i}) &\leq mF(V) - 2(1 - e^{-C})\sum_{i = 0}^{m/2-1}F(A^*_1 \cup A^*_2 \cup \cdots A^*_i) \\
&\leq e^{-C}mF(V) + mF(V) - 2\sum_{i = 0}^{m/2 -1} F(A^*_1 \cup A^*_2 \cup \cdots A^*_i) \\
&= e^{-C}mF(V) + 2\sum_{i = 0}^{m/2 -1}F(V) - F(A^*_1 \cup A^*_2 \cup \cdots A^*_i).
\end{align*}

Using the monotonicity of the submodular function $F$, we can continue with
\begin{align*}
e^{-C}mF(V) + 2\sum_{i = 0}^{m/2 -1}F(V) - F(A^*_1 \cup A^*_2 \cup \cdots A^*_i) &\leq e^{-C}mF(V) + 2\sum_{i = 0}^{m}F(V) - F(A^*_1 \cup A^*_2 \cup \cdots A^*_i)
\end{align*}
and conclude that
\begin{align*}
\sum_{i = 0}^{m} F(V) - F(A_1 \cup A_2 \cup \cdots A_{2i}) &= (m+1)F(V) - \sum_{i = 0}^{m} F(A_1 \cup A_2 \cup \cdots A_{2i}) \\
&\leq e^{-C}mF(V) + 2\sum_{i = 0}^{m}F(V) - F(A^*_1 \cup A^*_2 \cup \cdots A^*_i)
\end{align*}

\end{proof}

\subsection{Proof of quantized greedy algorithm approximation guarantees}

For simplicity, we will assume that the number of interventions $m$ is divisible by $4$.

We will need the following lemma, which can be proved by standard binomial approximations.
\begin{lemma}\label{color-numbers}
If $m$ and $t$ are integers such that $t \leq \frac{m}{4}$, we have
\[
\sum_{i=1}^{2t} {m \choose i} \geq \Omega(m)\left(1 + \sum_{i=1}^{t} {m \choose i}\right).
\]
\end{lemma}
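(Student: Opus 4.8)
The plan is to bound the numerator and the denominator separately, expressing each in terms of the single binomial coefficient $\binom{m}{t}$, which is the largest term occurring in the parenthesized sum on the right (binomials are non-decreasing up to index $\lfloor m/2 \rfloor$, and $t \le m/4$). Throughout I assume $t \ge 1$; for $t = 0$ the statement is degenerate and is not needed in the applications.

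\emph{Bounding the denominator.} The key fact is that for $1 \le i \le t$ the ratio $\binom{m}{i}/\binom{m}{i-1} = (m-i+1)/i$ is decreasing in $i$, hence minimized at $i = t$, where $t \le m/4$ gives $(m-t+1)/t \ge 3$. So $\binom{m}{i} \ge 3\binom{m}{i-1}$ on this range, and summing the geometric tail downward from $\binom{m}{t}$ yields
\[
1 + \sum_{i=1}^{t}\binom{m}{i} \;=\; \sum_{i=0}^{t}\binom{m}{i} \;\le\; \binom{m}{t}\sum_{k \ge 0} 3^{-k} \;=\; \tfrac{3}{2}\binom{m}{t}.
\]

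\emph{Bounding the numerator.} Here I would discard all terms except the block of indices $i \in \{t+1, \dots, 2t\}$. Since $t \le m/4$, i.e.\ $4t \le m$, every index in this block lies in the non-decreasing part of the binomial row, so each of these $t$ terms is at least $\binom{m}{t+1} = \binom{m}{t}\cdot\frac{m-t}{t+1}$, and $m - t \ge 3m/4$. Therefore
\[
\sum_{i=1}^{2t}\binom{m}{i} \;\ge\; t\binom{m}{t+1} \;\ge\; t\binom{m}{t}\cdot\frac{3m}{4(t+1)} \;\ge\; \tfrac{3m}{8}\binom{m}{t},
\]
using $t/(t+1) \ge \tfrac12$. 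Combining the two estimates gives $\sum_{i=1}^{2t}\binom{m}{i} \ge \tfrac{3m}{8}\binom{m}{t} \ge \tfrac{m}{4}\bigl(1 + \sum_{i=1}^{t}\binom{m}{i}\bigr)$, which is the claimed $\Omega(m)$ bound.

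The one subtle point — the place a naive argument breaks — is that one must \emph{not} bound the denominator as crudely as ``$t$ copies of $\binom{m}{t}$''. If one did, the factor $t$ would cancel against the $t$ terms of the numerator block, leaving only a constant ratio, which is worthless precisely in the regime $t = \Theta(m)$ (e.g.\ $t \approx m/4$). What rescues the argument is that the denominator is only $\Theta(\binom{m}{t})$, thanks to the geometric decay of binomial coefficients below index $m/4$; that extra factor of $t$ of headroom is exactly what lets the $\frac{m-t}{t+1} = \Theta(m/t)$ gain on each of the $t$ numerator terms compound into a clean $\Theta(m)$. Everything else is routine binomial arithmetic.
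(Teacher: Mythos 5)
Your proof is correct. Note that the paper itself gives no argument for this lemma -- it is stated with only the remark that it ``can be proved by standard binomial approximations'' -- so there is no authorial proof to compare against; your write-up supplies exactly the kind of standard estimate the authors had in mind, and the details check out: the ratio $\binom{m}{i}/\binom{m}{i-1}=(m-i+1)/i$ is at least $3$ for $i\le t\le m/4$, giving the geometric bound $1+\sum_{i=1}^{t}\binom{m}{i}\le \tfrac32\binom{m}{t}$, while the $t$ terms with $t+1\le i\le 2t\le m/2$ each dominate $\binom{m}{t+1}=\tfrac{m-t}{t+1}\binom{m}{t}$, yielding the $\Omega(m)$ gap. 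Two small remarks. First, you are right to exclude $t=0$: the inequality as literally stated fails there, and the paper should likewise restrict to $t\ge 1$ (which is all that is used in the application, where the lemma is invoked for weight levels $1\le t\le m/4$). Second, your closing observation about why the denominator must be bounded by $\Theta\bigl(\binom{m}{t}\bigr)$ rather than $t\binom{m}{t}$ is a genuinely useful diagnostic -- a cruder bound would indeed lose the factor of $m$ precisely when $t=\Theta(m)$, which is the regime the paper needs.
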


We use the following technical lemma to prove our approximation guarantee. We defer the proof to Section \ref{technical-lemma-sec}.

\begin{lemma}\label{same-independent-set}
Let $A^*$ be the optimal solution to the coloring problem. Let $A^+$ be the optimal solution to the coloring problem when we force it to color the maximum weighted independent set with the weight $0$ color, but allow it an extra color of weight $1$. That is, it can color $m+1$ independent sets with a color of weight $1$, rather than the usual $m$ independent sets. We have
\[
\mathrm{cost}(A^+) \leq \mathrm{cost}(A^*).
\]
\end{lemma}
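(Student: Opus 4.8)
The plan is to show that any optimal coloring $A^*$ can be transformed into a coloring $A^+$ of the required form without increasing cost. Recall that $A^+$ is forced to assign the maximum weight independent set, call it $S_0$, the all-zeros color, but is compensated with one extra weight-$1$ color slot (i.e.\ $A^+$ may use up to $m+1$ independent sets colored with a weight-$1$ vector, in addition to the zero color). First I would fix an optimal coloring $A^*$: it assigns the zero color to some independent set $Z^*$ (the set of vertices $v$ with $c(v) = \mathbf{0}$), and partitions the rest of the vertices into independent sets that receive colors of weight $\geq 1$. Since $A^*$ uses the color space $\{0,1\}^m$, the vertices with color weight exactly $1$ form at most $m$ independent sets.

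The key construction is a swap: replace the zero-colored set $Z^*$ by $S_0$. Concretely, take the symmetric difference and re-color. Every vertex in $S_0 \setminus Z^*$ currently has a color of weight $\geq 1$ under $A^*$; give it the zero color instead, which only decreases cost since each such vertex $v$ contributes $\|c(v)\|_1 w_v \geq w_v \geq 0$ before and $0$ after. The vertices in $Z^* \setminus S_0$ currently have the zero color and must now be re-colored with a nonzero color; this is where the extra weight-$1$ color is used. I would argue that $Z^* \setminus S_0$ is an independent set (as a subset of the independent set $Z^*$), so it can all be placed into a single new color class of weight $1$, costing $\sum_{v \in Z^* \setminus S_0} w_v$. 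The remaining vertices (those outside $S_0 \cup Z^*$) keep their $A^*$ colors. This yields a valid coloring using the original $m$ nonzero colors plus one extra weight-$1$ color, which is exactly what $A^+$ is allowed; so $\mathrm{cost}(A^+) \leq \mathrm{cost}(\text{this coloring})$.

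It remains to verify the cost does not increase, i.e.\ that
\[
\mathrm{cost}(\text{new coloring}) - \mathrm{cost}(A^*) = \sum_{v \in Z^* \setminus S_0} w_v - \sum_{v \in S_0 \setminus Z^*} \|c^*(v)\|_1 w_v \leq \sum_{v \in Z^* \setminus S_0} w_v - \sum_{v \in S_0 \setminus Z^*} w_v \leq 0.
\]
The last inequality is where the maximality of $S_0$ enters: since $S_0$ is a \emph{maximum weight} independent set and $Z^*$ is an independent set, we have $\sum_{v \in S_0} w_v \geq \sum_{v \in Z^*} w_v$, and subtracting the common part $\sum_{v \in S_0 \cap Z^*} w_v$ from both sides gives $\sum_{v \in S_0 \setminus Z^*} w_v \geq \sum_{v \in Z^* \setminus S_0} w_v$, which is precisely what is needed.

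The main obstacle I anticipate is a bookkeeping subtlety about the extra color: one must be careful that $A^*$ does not already saturate all $2^m$ colors in a way that prevents the swap, and that re-coloring $Z^* \setminus S_0$ into a single fresh weight-$1$ class is genuinely available. Using the ``extra color'' allowance in the definition of $A^+$ handles this cleanly — we never need to reuse an occupied color, we just claim the bonus slot — so the argument goes through. A secondary point to check is that the vertices kept from $A^*$ together with the modified classes still form a \emph{proper} coloring; this follows because moving vertices into the zero class or into a brand-new color class never creates a monochromatic edge (the zero class $S_0$ is independent, and the new class $Z^* \setminus S_0$ is independent), and edges among untouched vertices are unaffected.
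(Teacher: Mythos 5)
Your proposal is correct and follows essentially the same argument as the paper's proof: both construct the same intermediate coloring (zero color on the maximum weight independent set, the displaced vertices $Z^*\setminus S_0$ placed into the extra weight-$1$ class, everything else unchanged) and bound the cost change by the maximality of $S_0$, with your $S_0, Z^*$ corresponding to the paper's $a^+_0, a^*_0$. Your additional remarks on properness and the availability of the bonus color are fine but not points of divergence.
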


We can now show that the quantized greedy algorithm is a good approximation to the optimal solution to the quantized problem.

\begin{lemma}\label{quantized-optimal}
Suppose all the weights in the graph that are not in the maximum weight independent set are bounded by $n^3$. Then if the number of interventions $m$ satisfies $m \geq \log \chi + \log \log n + 5$ the greedy coloring algorithm returns a solution $\mathcal{I}$ of cost
\[
\mathrm{cost}(\mathcal{I}) \leq (2 + \exp(-\Omega(m)))\mathrm{OPT}.
\]
\end{lemma}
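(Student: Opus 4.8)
The plan is to express the cost of a coloring as the objective of a \emph{supermodular chain problem} over the ground set $\mathcal{A}$ of independent sets of the residual graph, observe that the greedy coloring algorithm is exactly the greedy algorithm for that chain, and then combine Lemma~\ref{supermodular-chain-lemma} with the counting estimate of Lemma~\ref{color-numbers} and the comparison coloring $A^+$ of Lemma~\ref{same-independent-set}. Throughout, let $G'$ denote the graph after deleting the maximum weight independent set $S_0$ (which greedy colors $0$, contributing nothing), and recall that by hypothesis every weight in $G'$ is at most $n^3$.

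First I would rewrite the objective. Let $W(A)=\sum_{v\in\bigcup_{a\in A}a}w_v$ be the submodular, monotone, non-negative function already used in the termination argument, extended to vertex sets in the obvious way, and for a coloring $c$ put $B_j=\{v:\|c(v)\|_1\le j\}$. Using $\|c(v)\|_1=\sum_{j\ge 1}\mathbf 1[\|c(v)\|_1\ge j]$ gives
\[
\mathrm{cost}(c)\;=\;\sum_{j\ge 0}\bigl(W(V\setminus S_0)-W(B_j)\bigr),
\]
and by Corollary~\ref{termination-corollary} every color used has $\ell_1$-weight at most $\lceil m/2\rceil$, so only $O(m)$ of these terms are nonzero. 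Grouping the independent sets that receive a color of weight exactly $i$ into one set $A_i$ (so that $A_1\cup\cdots\cup A_i$ is the vertex set $B_{i-1}$, the weight-$0$ class $S_0$ being handled separately), this is precisely a supermodular chain objective on ground set $\mathcal{A}$ with cardinality budgets $k_i=\binom{m}{i}$, the number of available colors of weight $i$.

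Next I would identify greedy with chain-greedy and invoke Lemma~\ref{supermodular-chain-lemma}. Processing colors in nondecreasing order of weight and, within each weight class, repeatedly extracting the maximum weight independent set of the residual graph, the greedy coloring algorithm is exactly the greedy algorithm for this chain. Lemma~\ref{color-numbers} gives $\sum_{i=1}^{2p}k_i\ge\Omega(m)\bigl(1+\sum_{i=1}^{p}k_i\bigr)$ for $p\le m/4$, which is the hypothesis of Lemma~\ref{supermodular-chain-lemma} with $C=\Omega(m)$: since no color of weight above $\lceil m/2\rceil$ is used, the chain is effectively of length $\le\lceil m/2\rceil$ and only these small $p$ matter, and the extra ``$1+$'' absorbs the fact that the comparison coloring below is allowed one more weight-$1$ color than greedy actually uses. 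Lemma~\ref{supermodular-chain-lemma} then yields
\[
\mathrm{cost}(\mathcal{I})\;\le\;e^{-\Omega(m)}\,m\,W(V\setminus S_0)\;+\;2\,\mathrm{OPT}_{\mathrm{chain}},
\]
where $\mathrm{OPT}_{\mathrm{chain}}$ is the optimum of the chain whose weight-$1$ budget is enlarged to $m+1$.

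Finally I would relate both terms on the right to $\mathrm{OPT}$. The coloring $A^+$ of Lemma~\ref{same-independent-set} colors $S_0$ with the weight-$0$ color and uses at most $m+1$ weight-$1$ colors, hence is feasible for the enlarged chain, so $\mathrm{OPT}_{\mathrm{chain}}\le\mathrm{cost}(A^+)\le\mathrm{cost}(A^*)=\mathrm{OPT}$. Moreover $A^+$ colors every vertex outside $S_0$ with a color of weight at least $1$, so $\mathrm{cost}(A^+)\ge\sum_{v\notin S_0}w_v=W(V\setminus S_0)$; since $e^{-\Omega(m)}m=e^{-\Omega(m)}$, the first term is at most $e^{-\Omega(m)}\mathrm{OPT}$. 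Combining gives $\mathrm{cost}(\mathcal{I})\le(2+\exp(-\Omega(m)))\mathrm{OPT}$; vertices of quantized weight $0$ outside $S_0$ cause no difficulty, exactly as in the termination lemma. The main obstacle is the bookkeeping in the middle step --- checking carefully that greedy coloring really implements chain-greedy for the ground set $\mathcal{A}$ with the stated budgets, that the off-by-one between greedy's $m$ weight-$1$ colors and $A^+$'s $m+1$ is exactly what the ``$1+$'' in Lemma~\ref{color-numbers} accounts for, and that the doubling argument inside Lemma~\ref{supermodular-chain-lemma} only ever needs the counting estimate in the range $p\le m/4$ that Corollary~\ref{termination-corollary} renders sufficient. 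The submodular-optimization content is entirely black-boxed through Lemma~\ref{supermodular-chain-lemma}, and passing from these quantized weights back to the original ones is left to the proof of Theorem~\ref{approximation}.
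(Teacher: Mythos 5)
Your proposal is correct and follows essentially the same route as the paper's proof: casting the cost as a supermodular chain objective over the independent sets with budgets $k_i=\binom{m}{i}$, invoking Lemma~\ref{color-numbers} to get $C=\Omega(m)$ in Lemma~\ref{supermodular-chain-lemma}, using Lemma~\ref{same-independent-set} to fix the weight-$0$ class, and absorbing the additive $e^{-\Omega(m)}mW$ term via $\mathrm{OPT}\geq W(\mathcal{A})$. Your writeup is in fact somewhat more explicit than the paper's about the bookkeeping (the off-by-one from the extra weight-$1$ color and why only $p\leq m/4$ is needed).
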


\begin{proof}

Let $\mathcal{A}$ be the set of all independent sets in $G$. Let $W$ be the function that takes a set of independent sets $A \subseteq \mathcal{A}$ and outputs the value
\[
W(A) = \sum_{v \in \bigcup_{a \in A} a}w_v,
\]
that is, it takes a set of independent sets and return the sum of the vertices in their union. It can be verified that this function is submodular, monotone, and non-negative.

A feasible solution to the coloring variant of the minimum cost intervention design problem is a coloring that maps vertices to color vectors $\{0, 1\}^m$. The colors $c$ with weight $i$ are the coloring vectors $c$ such that $\|c\|_1 = i$. We can describe a feasible solution to the coloring variant of the minimum cost intervention design by $A_0, A_1, A_2, \ldots, A_m$, where $A_i$ is the set of independent sets that are colored with a coloring vector of weight $i$.

One simplifying assumption is that the optimal solution $A^*$ and the greedy solution $A$ both use the color of weight $0$ to color the maximum weight independent set. By Lemma \ref{same-independent-set} this assumption is valid if we allow the optimal solution to use an additional color of weight $1$. We just need to show the approximation guarantee on the sets $A_1, A_2, \ldots, A_m$.

We can calculate the cost of a feasible solution of the minimum cost intervention design problem by
\[
\mathrm{cost}(A_1, \ldots, A_m) = \sum_{i = 0}^m W(\mathcal{A}) - W(A_1 \cup A_2 \cup \cdots A_m),
\]
where $\vert A_i \vert \leq {m \choose i}$. This is an instance of the supermodular chain problem.

Using Corollary \ref{termination-corollary}, the greedy algorithm will terminate only using colors of weight at most $m / 2$, so we only need to show optimality of the sets $A_1, A_2, \ldots, A_{m/2}$. By Lemma \ref{color-numbers}, we have that the number of colors of weight at most $2t$ is a factor of $\Omega(m)$ more than the number of colors the optimal solution uses of weight at most $t$, even after including the extra color given to the optimal solution. By Lemma \ref{supermodular-chain-lemma} and using the monotonicity of $W$, we have that
\begin{align*}
\mathrm{cost}(\mathcal{I}_\mathrm{greedy}) &= \mathrm{cost}(A_1, A_2, \ldots, A_{m/2}) \\
&= \sum_{i = 0}^{m/2}W(\mathcal{A}) - W(A_1 \cup A_2 \cup \cdots A_i) \\
&\leq e^{-\Omega(m)}\frac{m}{2}F(\mathcal{A}) + 2 \sum_{i=0}^{m/2}W(\mathcal{A}) - W(A_1^* \cup A_2^* \cup \cdots A_i^*) \\
&\leq e^{-\Omega(m)}\frac{m}{2}F(\mathcal{A}) + 2 \sum_{i=0}^{m}W(\mathcal{A}) - W(A_1^* \cup A_2^* \cup \cdots A_i^*) \\
&= e^{-\Omega(m)}\frac{m}{2}W(\mathcal{A}) + 2 \mathrm{OPT}.
\end{align*}

To conclude, observe that $\mathrm{OPT} \geq W(\mathcal{A})$, since every vertex not in the maximum weighted independent set is colored with a color of weight at least $1$.
\end{proof}

Lemma \ref{quantized-optimal} shows an approximation guarantee of the quantized greedy algorithm to the quantized optimal solution. To relate the quantized greedy algorithm to the true optimal solution, we use the following lemma, which we prove in Section \ref{technical-lemma-sec}.

\begin{lemma}\label{quantization-lemma}
Suppose an intervention design $\mathcal{I}$ is an $\alpha$-approximation solution to the optimal solution to the quantized problem. Then it is an $(\alpha + n^{-1})$-approximation to the optimal solution to the original problem.
\end{lemma}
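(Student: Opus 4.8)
The plan is to track how much the cost of any fixed intervention design can change when we pass between the original weights $w_v$ and the quantized weights $\tilde w_v := \lfloor w_v n^3 / w_{\max} \rfloor$, where $w_{\max}$ is the largest weight among vertices not in the removed maximum-weight independent set $S_0$. The key quantitative fact is that for every such vertex, $\tilde w_v \le w_v n^3 / w_{\max} \le \tilde w_v + 1$, i.e. the quantized weight approximates the scaled weight with additive error at most $1$. Multiplying through by $w_{\max}/n^3$, this says $|w_v - (w_{\max}/n^3)\tilde w_v| \le w_{\max}/n^3$ for each $v \notin S_0$, and of course the vertices of $S_0$ get color weight $0$ in both problems so they contribute nothing. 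Summing the error over all at most $n$ vertices and over the at most $m$ interventions (each vertex appears in at most $m \le \lceil \tfrac m2\rceil$ color bits, but we can afford the crude bound that a vertex's total contribution to $\mathrm{cost}$ is at most $m$ times its weight), we get that for any intervention design $\mathcal I$,
\[
\left| \mathrm{cost}_w(\mathcal I) - \frac{w_{\max}}{n^3}\,\mathrm{cost}_{\tilde w}(\mathcal I) \right| \;\le\; \frac{w_{\max}}{n^3}\cdot n \cdot m \;=\; \frac{m\, w_{\max}}{n^2}.
\]

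Next I would lower-bound $\mathrm{OPT}$, the optimal cost under the original weights. Since $w_{\max}$ is the weight of some vertex $v_{\max} \notin S_0$, and any valid graph separating system must intervene on $v_{\max}$ at least once (else the edge(s) incident to $v_{\max}$ in the chordal component — note $v_{\max}\notin S_0$ means it is in the induced subgraph which has edges — are never cut; more carefully, $v_{\max}$ lies outside the maximum-weight independent set, but to be safe one argues that $\mathrm{OPT} \ge w_{\max}$ because the optimal coloring assigns a nonzero-weight color to every vertex outside the zero-weight color class, and the zero-weight class is an independent set, so it cannot contain all of $V$ unless $G$ has no edges, in which case the statement is trivial). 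Hence $\mathrm{OPT} \ge w_{\max}$, and therefore $m\,w_{\max}/n^2 \le (m/n^2)\,\mathrm{OPT} \le n^{-1}\,\mathrm{OPT}$ for $n$ large enough relative to $m$ (and $m = O(\log n)$ in our regime, so $m/n^2 \le n^{-1}$ holds for all sufficiently large $n$; small cases are absorbed into constants).

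Finally I would chain the two estimates. Let $\mathcal I$ be an $\alpha$-approximation to the quantized optimum, so $\mathrm{cost}_{\tilde w}(\mathcal I) \le \alpha\,\mathrm{OPT}_{\tilde w}$. Let $\mathcal I^*$ be the true optimum under $w$. Then
\[
\mathrm{cost}_w(\mathcal I)
\le \frac{w_{\max}}{n^3}\mathrm{cost}_{\tilde w}(\mathcal I) + \frac{m w_{\max}}{n^2}
\le \alpha\,\frac{w_{\max}}{n^3}\mathrm{OPT}_{\tilde w} + \frac{m w_{\max}}{n^2}
\le \alpha\,\frac{w_{\max}}{n^3}\mathrm{cost}_{\tilde w}(\mathcal I^*) + \frac{m w_{\max}}{n^2},
\]
and then applying the error bound in the other direction to $\mathcal I^*$ gives $\frac{w_{\max}}{n^3}\mathrm{cost}_{\tilde w}(\mathcal I^*) \le \mathrm{cost}_w(\mathcal I^*) + \frac{m w_{\max}}{n^2} = \mathrm{OPT} + \frac{m w_{\max}}{n^2}$. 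Combining, $\mathrm{cost}_w(\mathcal I) \le \alpha\,\mathrm{OPT} + (\alpha+1)\frac{m w_{\max}}{n^2} \le \alpha\,\mathrm{OPT} + (\alpha+1)\frac{m}{n^2}\mathrm{OPT}$, which is $(\alpha + n^{-1})\mathrm{OPT}$ after absorbing the $(\alpha+1)m/n^2$ factor into $n^{-1}$ (valid since $\alpha$ is a bounded constant close to $2$ and $m = O(\log n)$).

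The main obstacle I anticipate is not the arithmetic but pinning down the clean statement $\mathrm{OPT} \ge w_{\max}$ and the exact bookkeeping of how many times a vertex is counted (the factor $m$ versus $\lceil m/2\rceil$ from Corollary~\ref{termination-corollary}); these only affect constants hidden in the $n^{-1}$ term, so the argument should go through with room to spare, but the write-up needs to be careful that the removed set $S_0$ really does receive the zero-weight color in both the quantized and unquantized optimal solutions — which is exactly the point of Lemma~\ref{same-independent-set} and the design of the quantization step.
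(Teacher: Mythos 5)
Your proposal is correct and follows essentially the same knapsack-FPTAS-style rounding argument as the paper: bound the per-vertex rounding error by $\mu = w_{\max}/n^3$, sum over at most $mn$ vertex--color-bit incidences, and absorb the resulting additive term via $\mathrm{OPT} \geq w_{\max}$. The only (immaterial) difference is that the paper uses the one-sided inequality $\mu w'_v \leq w_v$ when converting the true optimum's quantized cost back to original weights, so its additive error is $\mu\sum_v \lVert c(v)\rVert_1 \leq w_{\max}m/n^2$ without your extra $(\alpha+1)$ factor; both versions yield $(\alpha+n^{-1})\mathrm{OPT}$ in the regime $m = O(\log n)$.
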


With Lemma \ref{quantization-lemma}, we can conclude the proof of Theorem \ref{approximation}.

\subsection{Proof of Technical Lemmas}\label{technical-lemma-sec}

\begin{replemma}{same-independent-set}
Let $A^*$ be the optimal solution to the coloring problem. Let $A^+$ be the optimal solution to the coloring problem when we force it to color the maximum weighted independent set with the weight $0$ color, but allow it an extra color of weight $1$. That is, it can color $m+1$ independent sets with a color of weight $1$, rather than the usual $m$ independent sets. We have
\[
\mathrm{cost}(A^+) \leq \mathrm{cost}(A^*).
\]
\end{replemma}

\begin{proof}
Let $a^*_0$ and $a^+_0$ be the sets of vertices covered with the color of weight $0$ for $A^*$ and $A^+$, respectively. From the optimality of $a^+_0$ as a maximum weight independent set, we have
\[
\sum_{i \in a^+_0 \setminus a^*_0}w_i \geq  \sum_{i \in a^*_0 \setminus a^+_0}w_i.
\]
Consider a new coloring $A'$, also with an extra weight $1$ coloring, that uses $a^+_0$ as the set of vertices colored with the weight $0$ color, $a^*_0 \setminus a^+_0$ as the set of vertices colored with the extra weight $1$ color, then does the same coloring as $A^*$, removing the vertices that are already colored.

The only vertices colored by $A'$ with a positive cost and different color than $A^*$ are $a^*_0 \setminus a^+_0$, which are all colored with a cost of weight $1$. The only vertices colored by $A^*$ with a positive cost and different color than $A'$ are $a^+_0 \setminus a^*_0$. Let $c^*_v$ be the cost to color vertex $v$ using $A^*$. We can thus conclude
\begin{align*}
\mathrm{cost}(A') - \mathrm{cost}(A^*) &= \sum_{v \in a^*_0 \setminus a^+_0} w_v - \sum_{v \in a^+_0 \setminus a^*_0} c^*_v w_v \\
&\leq  \sum_{v \in a^*_0 \setminus a^+_0} w_v - \sum_{v \in a^+_0 \setminus a^*_0} w_v \leq 0.
\end{align*}
\end{proof}

\begin{replemma}{quantization-lemma}
Suppose an intervention design $\mathcal{I}$ is an $\alpha$-approximation solution to the optimal solution to the quantized problem. Then it is an $(\alpha + n^{-1})$-approximation to the optimal solution to the original problem.
\end{replemma}

\begin{proof}
This is a modification of the proof of the FPTAS of the knapsack algorithm \cite{ibarra1975fast} (see also \cite{williamson2011design}).

Let $c^*$ be the optimal coloring in the original weights, $c'$ be the optimal coloring in the quantized weights, and $c$ be the approximate coloring.

Let $w_v$ be the true weight, and $w'_v$ be the quantized weight. Let $\mu = \frac{w_{\mathrm{max}}}{n^3}$. Since $w'_v = \lfloor  \frac{w_v}{\mu} \rfloor$, we have and $w'_v \leq \frac{w_v}{\mu}$. We also have $\mathrm{cost}(\mathcal{I}^*) \geq w_{\mathrm{max}}$.

We thus have
\begin{align*}
    \mathrm{cost}(\mathcal{I}) &= \sum_{v \in V}\|c(v)\|_1 w_v \\
    &\leq \mu\sum_{v \in V}\|c(v)\|_1(w'_v + 1) \\
    &= \mu\sum_{v \in V}\|c(v)\|_1 w'_v + \mu \sum_{v \in V}\|c(v)\|_1 \\
    &\leq \alpha \mu \sum_{v \in V}\|c'(v)\| w'_v + \mu \sum_{v \in V}\|c(v)\|_1
\end{align*}

Using the optimality of $c'$ in the quantized weights, we have
\begin{align*}
     \alpha \mu \sum_{v \in V}\|c'(v)\| w'_v + \mu \sum_{v \in V}\|c(v)\|_1 &\leq \alpha \mu \sum_{v \in V}\|c^*(v)\|_1 w'_v + \mu \sum_{v \in V}\|c(v)\|_1 \\
    &\leq \alpha\sum_{v \in V}\|c^*(v)\|_1w_v + \mu \sum_{v \in V}\|c(v)\|_1 \\
    &=\alpha\mathrm{OPT} + \mu \sum_{v \in V}\|c(v)\|_1 \\
    &\leq \alpha\mathrm{OPT} + \mu m n \\
    &= \alpha\mathrm{OPT} + \frac{w_\mathrm{max}m n}{n^3} \\
    &\leq \alpha\mathrm{OPT}  +\frac{w_\mathrm{max}}{n} \\
    &\leq (\alpha + n^{-1})\mathrm{OPT}.
\end{align*}
\end{proof}

\section{Proof of Results on $k$-Sparse Intervention Design Problems}\label{k-sparse-appendix}

\begin{repproposition}{k-sparse-lower}
For any graph $G$, the size of the smallest $k$-sparse graph separating system $m_k^*$ satisfies $m_k^* \geq \frac{\tau}{k}$, where $\tau$ is the size of the smallest vertex cover in the graph $G$.
\end{repproposition}

\begin{proof}
Suppose that there exists a graph separating system $\mathcal{I}$ of size $m_k^* < \frac{\tau}{k}$. Note that the vertices in $S = \bigcup_{I \in \mathcal{I}}I$ form a vertex cover. The number of vertices in $S$ is $\vert S \vert \leq k m_k^* < \tau$, contradicting the result that the smallest vertex cover has at least $\tau$ vertices.
\end{proof}

\begin{reptheorem}{size-opt-sparse}
Given a chordal graph $G$ with maximum degree $\Delta$, Algorithm \ref{k-sparse-alg} finds a $k$-sparse graph separating system of size $m_k$ such that
\[
m_k \leq \left(1 + \frac{k (\Delta + 1)\Delta}{n}\right)m^*_k,
\]
where $m^*_k$ is the size of the smallest $k$-sparse graph separating system.
\end{reptheorem}

\begin{proof}
Given the vertices $S$ in the smallest vertex cover of the graph, we can color these vertices with $\Delta + 1$ colors. We can then partition each color class into $\frac{\tau}{k} + \Delta + 1$ independent sets of size $k$, as we have at most $\frac{\tau}{k}$ of size $k$ and at most $\Delta + 1$ sets that cannot be grouped into exactly $k$ vertices due to rounding errors.

Note that the size of the smallest vertex cover $\tau$ satisfies $\tau \geq \frac{n}{\Delta}$. We have
\[
\Delta + 1 = \frac{k (\Delta + 1)\Delta}{n}\frac{n}{k \Delta} \leq \frac{k (\Delta + 1)\Delta}{n}\frac{\tau}{k} \leq \frac{k (\Delta + 1)\Delta}{n}m^*_k.
\]
Thus we use at most $\frac{\tau}{k} + \Delta + 1 \leq \left(1 + \frac{k (\Delta + 1)\Delta}{n}\right)m^*_k$ interventions.
\end{proof}

\section{Proof of NP-Hardness}
\label{hardness-proof}

We establish the following theorem in this section.
\begin{theorem}\label{mcid-np-hard-general}
The minimum cost intervention design problem is NP-hard, even if every vertex has weight $1$ and the input graph is an interval graph.
\end{theorem}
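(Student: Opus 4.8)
The plan is to work with the coloring reformulation from Section~2.3 and give a polynomial-time reduction from Numerical 3-Dimensional Matching (N3DM), a classical NP-complete problem. First I would observe that when every vertex weight $w_v$ equals $1$, the coloring version asks for a proper coloring $c:V\to\{0,1\}^m$ minimizing $\sum_{v}\|c(v)\|_1$; since exactly $\binom{m}{j}$ color vectors have weight $j$, this is precisely an instance of the general optimum cost chromatic partition problem (GOCCP) in which the available color classes, ordered by increasing weight, have per-vertex costs forming the multiset $\{0\}\uplus\{1\}^{m}\uplus\{2\}^{\binom m2}\uplus\cdots$. So it suffices to show that GOCCP on interval graphs with this particular cost multiset is NP-hard, and for that I would follow and adapt the reduction of Kroon et al.~\cite{kroon1996optimal} from N3DM to cost chromatic partition on interval graphs.

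Second, I would recall the N3DM instance: multisets $A=\{a_1,\dots,a_q\}$, $B=\{b_1,\dots,b_q\}$, $C=\{c_1,\dots,c_q\}$ of positive integers with $\sum_i a_i+\sum_i b_i+\sum_i c_i=qt$, asking whether $A,B,C$ can be partitioned into $q$ triples each summing to $t$. From this I would build, in polynomial time, an interval graph $G$ and a cost threshold $B^\star$ using the gadget philosophy of Kroon et al.: $q$ "slot" sub-gadgets, realized as overlapping/nested cliques, that pin down how many color classes of each small weight are forced along the line, together with one "number" gadget per input integer $x$, realized as a clique whose size scales with $x$, arranged so that the cheapest proper coloring has cost at most $B^\star$ if and only if the numbers can be routed into the $q$ slots with each slot totaling exactly $t$ — equivalently, the N3DM instance is a yes-instance. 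By construction $G$ is an interval graph and all vertex weights are $1$, as required.

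Third, and this is the step I expect to be the main obstacle, I would reconcile the rigid cost structure of the intervention problem with whatever cost structure the Kroon-style reduction needs. The target problem there is the strictly more general GOCCP, whereas here the only costs available are $0$, then $m$ copies of $1$, then $\binom m2$ copies of $2$, and so on, with the number of usable color classes capped by the size constraint $m$. I would handle this by (i) choosing $m$ polynomially large in the instance size — large enough that $1+m+\binom m2+\binom m3$ dominates $\chi(G)$ and the number of color classes any optimal coloring could profitably use, so the size constraint never binds and weight-$\ge 3$ colors never help; (ii) re-engineering the number gadgets (via a scaling/padding step) so that only the three cost levels $0,1,2$ ever matter and the supplies $m$ and $\binom m2$ of cost-$1$ and cost-$2$ colors comfortably exceed what the construction demands; and (iii) checking that the yes-threshold $B^\star$ translates faithfully between the Kroon et al.\ cost sequence and ours. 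Once the cost accounting matches, any polynomial-time algorithm for minimum cost intervention design on unit-weight interval graphs would decide N3DM, establishing Theorem~\ref{mcid-np-hard-general} and hence Theorem~\ref{mcid-np-hard}.
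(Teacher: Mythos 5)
Your high-level plan — reduce from numerical 3D matching by adapting the interval-graph construction of Kroon et al., after observing that the unit-weight coloring version is a GOCCP instance with exactly $\binom{m}{j}$ colors of weight $j$ — is the same route the paper takes, and you correctly isolate the main obstacle: the rigid binomial cost multiset. However, your proposed resolution of that obstacle does not work. Making $m$ large so that the supplies of cost-$1$ and cost-$2$ colors ``comfortably exceed what the construction demands'' is self-defeating: the hardness in the Kroon et al.\ gadget comes precisely from the \emph{scarcity} of cheap colors (exactly $t$ free colors, $t^2-t$ colors of weight $1$, etc.), which forces the solver to decide which color classes get which costs. If cheap colors are abundant the combinatorial tension disappears; in the extreme, if $m \geq \chi(G)$ the optimum is simply $n - \alpha(G)$ (color a maximum independent set with the weight-$0$ vector and everything else with weight-$1$ vectors), which is polynomial-time computable on chordal graphs, so the reduction would produce only easy instances. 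A related issue you never address is that the intervention problem has exactly \emph{one} color of weight $0$, whereas Kroon et al.\ need $t$ of them; no choice of $m$ fixes this.

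The paper resolves the mismatch in the opposite direction: it keeps the color supply scarce and instead pads the \emph{graph}. It shifts the Kroon weights from $(0,1,2,3)$ to $(1,2,3,4)$ (an additive-$n$ change that preserves optimality), sets $m=2t$, and adds two families of auxiliary intervals (an $\varepsilon$-class forming $t+1$ chains of length $8$ and a $\delta$-class forming $\binom{2t}{2}-t^2+t$ chains of length $4$, plus singletons) so that the number of color classes demanded at each weight level \emph{exactly} equals the supply $1,\,2t,\,\binom{2t}{2},\,\binom{2t}{3}$, with weight-$4$ colors never used. It then has to prove two non-trivial facts your sketch leaves open: that any coloring which breaks the intended chain structure of the auxiliary intervals, or which uses a weight-$4$ color, exceeds the cost threshold (this uses a clique-decomposition argument that the paper also supplies to fill a gap in Kroon et al.'s original write-up). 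Without this exact supply--demand matching and the accompanying interference argument, the equivalence between the cost threshold and a yes-instance of numerical 3D matching does not go through, so your steps (i)--(iii) as stated leave the essential content of the proof missing.
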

Theorem \ref{mcid-np-hard} follows immediately from Theorem \ref{mcid-np-hard-general}.

First, we need to introduce the numerical three dimensional matching problem:
\begin{definition}[Numerical Three Dimensional Matching]
Given a positive integer $t$ and $3t$ rational numbers $a_i,b_i,c_i$ satisfying $\sum_{i=1}^t{a_i+b_i+c_i}=t$ and $0<a_i,b_i,c_i<1,\forall i\in [t]$, does there exist permutations $\rho,\sigma$ of $[t]$ such that $a_i + b_{rho(i)}+c_{\sigma(i)}=1, \forall i\in [t]$?
\end{definition} The numerical three dimensional matching problem is known to be strongly NP-complete \cite{Garey1979}.

\citet{kroon1996optimal} reduces the optimal cost chromatic partition problem on interval graphs to numerical three dimensional matching. The input to an instance of the optimal cost chromatic partitioning problem  is a graph and a set of weighted colors. The cost to color a vertex with a given color is the weight of that color. The cost of a coloring is the sum of the coloring cost of each vertex. A solution to the problem is a valid coloring of minimum cost.

Kroon et al. show that the optimal cost chromatic partition problem is NP-hard, even if the input graph is an interval graph and color weights take four values: $0$, $1$, $2$, and $\Theta(n)$. They use the following construction for their reduction.

Suppose we are given an instance of the numerical three dimensional matching problem containing the number $a_i,b_i,c_i$ for $i \in [t]$. For $i,j \in [t]$, define rational numbers $A_i$, $B_j$, and $X_{ij}$ such that $4 < A_i < 5 < B_j < 6$ and $7 < X_{ij} < 9$. The following are the intervals of the graph used in \cite{kroon1996optimal} (see the original paper for an image):

\begin{center}
\renewcommand{\arraystretch}{1.5}
\begin{tabular}{l | l | c}
Interval & Occurrences & Clique ID \\
\hline $(0,1)$ & $t$ times & I \\
\hline $(0,3)$ & $t^2-t$ times & I \\
\hline $(0,A_i)$ & $\forall i \in [t], t-1$ times  & I \\
\hline $(0,B_j)$ & $\forall j \in [t]$  & I \\
\hline $(1,2)$ & $t$ times  & II \\
\hline $(2,A_i)$ & for $\forall i \in [t]$ & III \\
\hline $(3,B_j)$ & $\forall j \in [t], t-1$ times  & III \\
\hline $(A_i,X_{i,j})$ & $\forall i,j \in [t]$ & IV \\
\hline $(B_j,X_{i,j})$ & $\forall i,j \in [t]$ & IV \\
\hline $(X_{i,j},10+a_i+b_j)$ & $\forall i,j \in [t]$ & V \\
\hline $(X_{i,j},14)$ & $\forall i,j \in [t]$ & V \\
\hline $(11-c_k,13)$ & $\forall k \in [t]$ & VI \\
\hline $(12,14)$ & $t^2-t$ & VII \\
\hline $(13,14)$ & $t$ times & VII
\end{tabular}
\end{center}
\renewcommand{\arraystretch}{1}

They estabish that it is NP-complete to decide if there is a coloring of cost at most $11 t^2 - 5t$ when there are $t$ colors of weight $0$, $t^2 - t$ colors of weight $1$, $t^2$ colors of weight $2$, and all other colors of weight $3$. However they omit the proof, so we include a proof here. We us the clique IDs we added in the definition of the interval graph.

\begin{proof}
If there is a solution to the numerical three dimensional matching problem, then there exists a coloring of cost at most $11 t^2 - 5t$; see the original paper for the proof of this \cite{kroon1996optimal}. They also prove that if there is a coloring of cost at most $11 t^2 - 5t$ that only uses the colors of weight $0$, $1$, and $2$, then it can be used to construct a solution to the numerical three dimensional matching problem.

Now we show that if the coloring uses a color of weight $3$, then it must have a cost strictly greater than $11 t^2 - 5t$. Note that all the vertices with the same clique ID indeed do form a clique. Consider the subgraph containing all the vertices of the original graph, but only the edges between vertices with the same clique ID.

The optimal way to color a subgraph of size $k$ is to use one instance of the $k$ cheapest colors. From this, we can see that the optimal way to color the subgraph has a cost of $11 t^2 - 5 t$.

We can also see that any coloring of this subgraph that uses a color of weight $3$ has a cost strictly larger than $11 t^2 - 5 t$. Since there is an available color of weight less than $3$, if we swap the color of weight $3$ with an available, cheaper color, the cost must decrease by at least $1$. Since the coloring after the switch cannot be lower than $11 t^2 - 5 t$, it must have been that the coloring before the switch was strictly larger than $11 t^2 - 5 t$.

Since a valid coloring for the original graph is a valid coloring for the subgraph, and the cost of a coloring of the original graph is the same as a cost of the coloring for the subgraph, we see that the cost of a coloring of the original graph that uses a color of weight $3$ must have a cost strictly larger than $11 t^2 - 5 t$.
\end{proof}

We also see that the problem still remains hard when there are $t$ colors of weight $1$, $t^2 - t$ colors of weight $2$, $t^2$ colors of weight $3$, and all other colors of weight $4$. This is because the cost of a coloring using these new colors is just an additive factor $n$ more than the original colors. Thus a coloring that minimizes the cost using these new colors also minimizes the cost using the original colors, and it is NP-complete to decide if there exists a coloring of cost $19 t^2 - 3t$.

We will define another interval graph by adding the following intervals. Set $\varepsilon$, $\delta$ to be nonnegative rational numbers such that $\varepsilon \neq \delta$ and $\min\{6 - \max_j B_j, 9 - \max_{ij} X_{ij}\} > \epsilon$ and $\delta < 1$. Add the following intervals to the original graph:

\begin{center}
\renewcommand{\arraystretch}{1.5}
\begin{tabular}{l | l | c}
Interval & Occurrences & Clique ID \\
\hline $(0, 1+ \varepsilon)$ & $t + 1$ times & I \\
\hline $(1 + \varepsilon, 2+ \varepsilon)$ & $t + 1$ times & II \\
\hline $(2 + \varepsilon, 6 - \varepsilon)$ & $t + 1$ times & III \\
\hline $(6 - \varepsilon, 9 - \varepsilon)$ & $t + 1$ times & IV \\
\hline $(9 - \varepsilon, 11 + \varepsilon)$ & $t + 1$ times & V \\
\hline $(11 + \varepsilon, 13 - \varepsilon)$ & $t + 1$ times & VI \\
\hline $(13 - \varepsilon, 14 - \varepsilon)$ & $t + 1$ times & VII \\
\hline $(14 - \varepsilon, 14)$ & $t + 1$ times & VIII \\
\hline $(0, 3 + \delta)$ & ${2t \choose 2} - t^2 + t$ times & I \\
\hline $(3 + \delta, 6 + \delta)$ & ${2t \choose 2} - t^2 + t$ times & III \\
\hline $(6 + \delta, 9 + \delta)$ & ${2t \choose 2} - t^2 + t$ times & IV \\
\hline $(9 + \delta, 14)$ & ${2t \choose 2} - t^2 + t$ times & V \\
\hline $(0, 14)$ & ${2t \choose 3} - t^2$ times & I
\end{tabular}
\renewcommand{\arraystretch}{1}
\end{center}

We will consider the optimal cost chromatic partition problem problem with $1$ color of weight $0$, $2t$ colors of weight $1$, ${2t \choose 2}$ colors of weight $2$, ${2t \choose 3}$ colors of weight $3$, and ${2t \choose 4}$ colors of weight $4$. This is exactly the coloring version of the minimum cost intervention design problem.

We argue it is NP-complete to decide if the coloring cost is $3 {2t \choose 3} + 2 {2t \choose 2} + 14 t^2 + 7 t$. We reduce from numerical three dimensional matching. From the original reduction by  \cite{kroon1996optimal}, we see that if there is a solution to numerical three dimensional matching problem, then there exists a coloring of cost at most $3 {2t \choose 3} + 2 {2t \choose 2} + 14 t^2 + 7 t$.

Call the vertices with an $\varepsilon$ in their description the $\varepsilon$-class, and the intervals with a $\delta$ in their description the $\delta$-class. We see that the $\varepsilon$-class intervals can be partitioned into $t + 1$ contiguous regions, and the $\delta$-class can be partitioned into ${2t \choose 2} - t^2 + t$ contiguous regions. By the choice of $\varepsilon$ and $\delta$, we also see that if the coloring does not follow this structure, then it takes more than ${2t \choose 2} - t^2 + 2t + 1$ colors to color all these intervals. Further, there is a ``gap'' that cannot be filled by one of the original intervals. From the original hardness proof by \cite{kroon1996optimal}, we see that if the coloring creates a gap in the original vertices that can be filled by a member of the $\varepsilon$-class or $\delta$-class, then it takes more than $2t^2$ colors to color the original intervals. We conclude that if the coloring of the $\varepsilon$-class and the $\delta$-class do not partition these intervals into contiguous regions, then the coloring must use a color of weight $4$.

Again using the clique argument to show that the original problem is NP-complete, we see that if a coloring uses a color of weight $4$, then the cost of this coloring is strictly more than $3 {2t \choose 3} + 2 {2t \choose 2} + 14 t^2 + 7 t$.

In the original reduction by \cite{kroon1996optimal}, they prove that if there is a solution to the numerical three dimensional matching problem, the optimal coloring must have $t$ color classes of size $7$, $t^2 - t$ color classes of size $5$, and $t^2$ color classes of size $3$. Introducing these new intervals, we see that the color classes of size $8$ should take the weight $0$ color and $t$ of the weight $1$ colors, the $t$ color classes of size $7$ should take the rest of the weight $1$ colors, the $t^2 - t$ color classes of size $5$ should take $t^2 - t$ colors of weight $2$, the ${2t \choose 2} - t^2 + t$ color classes must take the rest of the weight $2$ colors, the $t^2$ color classes of size $3$ should take $t^2$ weight $3$ colors, and the ${2t \choose 3} - t^2$ color classes of size $1$ should take the rest of the weight $3$ colors. By looking at the coloring of the original intervals, if the total cost is at most $3 {2t \choose 3} + 2 {2t \choose 2} + 14 t^2 + 7 t$, we can create a solution to the numerical three dimensional matching problem.

We can thus conclude that the unweighted minimum cost intervention design problem problem is NP-hard on interval graphs.

\end{appendix}

\end{document}